\newtheorem{thm}{Theorem}
\newtheorem{lemma}{Lemma}
\newtheorem{prop}{Proposition}
\newcommand{\E}{\mathrm{E}}
\newcommand\tab[1][0.5cm]{\hspace*{#1}}
\title{Linear Regression with Shuffled Labels}
\author{Abubakar Abid$^1$ \tab Ada Poon$^1$ \tab James Zou$^{1,2}$}
\date{%
    $^1$Department of Electrical Engineering\\%
    $^2$Department of Biomedical Data Science\\
    Stanford University
    \\[2ex]%
    \today
}
\begin{document} 

\makeatletter
\hfil\parbox[t]{0.8\textwidth}{\centering\LARGE\bfseries\@title}\par
\kern0.7cm \hrule\kern0.5cm
\hfil\parbox[t]{0.7\textwidth}{\centering\large\@author\\[3ex] \normalsize \@date}\par
\kern0.5cm \hrule\kern0.5cm
\makeatother






\begin{abstract} 
Is it possible to perform linear regression on datasets whose labels are shuffled with respect to the inputs? We explore this question by proposing several estimators that recover the weights of a noisy linear model from labels that are shuffled by an unknown permutation. We show that the analog of the classical least-squares estimator produces inconsistent estimates in this setting, and introduce an estimator based on the self-moments of the input features and labels. We study the regimes in which each estimator excels, and generalize the estimators to the setting where partial ordering information is available in the form of experiments replicated independently. The result is a framework that enables robust inference, as we demonstrate by experiments on both synthetic and standard datasets, where we are able to recover approximate weights using only shuffled labels. Our work demonstrates that linear regression in the absence of complete ordering information is possible and can be of practical interest, particularly in experiments that characterize populations of particles, such as flow cytometry.

\end{abstract} 

\section{Introduction}
\label{section:intro}
Since at least the 19th century, linear regression has been widely used in statistics to infer the relationship between one more explanatory variables (or \textit{input features}) and a continuous dependent variable (or \textit{label}) \cite{stanton2001galton, seber2012linear}. In the classical setting, linear regression is used on supervised datasets that are fully and individually labeled. Not all data fit this criterion, so, in recent years, the question of inference from \textit{weakly-supervised} datasets has drawn attention in the machine learning community \cite{crandall2006weakly, li2013convex, zantedeschi2016beta}. In weakly-supervised datasets, data are neither entirely labeled nor entirely unlabeled; a subset of the data may be labeled, as is the case in semi-supervised learning, or the data may be implicitly labeled, as occurs, for example, in multi-instance learning \cite{zhou2011semi, zhou2007relation}. Weakly-supervised datasets naturally arise in situations where obtaining labels for individual data is expensive or difficult; often times, it is significantly easier to conduct experiments that provide partial information. 

In this paper, we study one specific case of weakly-supervised data: shuffled data, in which all of the labels are observed, but the mutual ordering between the input features and the labels is unknown. Shuffled linear regression, then, can be described as a variant of traditional linear regression in which the labels are additionally perturbed by an unknown permutation. More concretely, the learning setting is defined as follows: we observe (or choose) a matrix of input features $\boldsymbol{x} \in \mathbb{R}^{n \times d}$, and observe a vector of output labels $y \in \mathbb{R}^{n}$ that is generated by the following process: 
\begin{equation}
\label{eqn_model}
y = \boldsymbol{\pi_0} \boldsymbol{x} w_0 + e
\end{equation}
where $\boldsymbol{\pi_0}$ is an unknown $n \times n$ permutation matrix, $w_0 \in \mathbb{R}^{d}$ are unknown weights, and $e \in \mathbb{R}^{n}$ is additive Gaussian noise. Here, $n$ is the number of data points, and $d$ is the dimensionality. The model described by (\ref{eqn_model}) and illustrated in Figure \ref{fig:model} frequently occurs in experiments that simultaneously characterize or analyze a large number of objects. Consider an example.

\begin{figure}[]
\centering
\includegraphics[width=0.7\columnwidth]{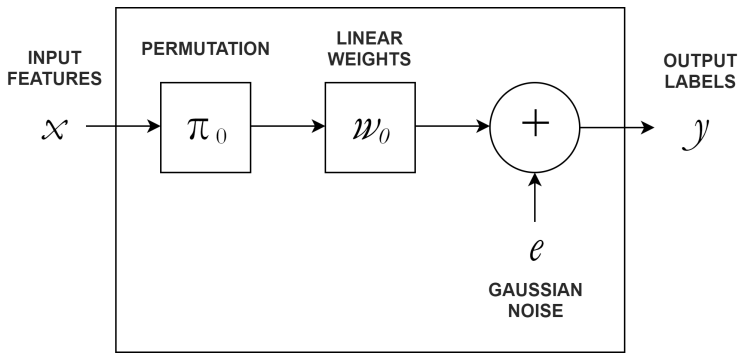}
\caption{We study shuffled linear regression, the problem of estimating linear weights $w_0$ given observed input features $\boldsymbol{x}$ and output labels $y$ in the presence of an unknown permutation $\boldsymbol{\pi_0}$ and additive Gaussian noise, $e$.}
\label{fig:model}
\end{figure}

\paragraph*{Example: Flow Cytometry.} The characterization of cells using flow cytometry usually proceeds by suspending cells in a fluid and flowing them through a laser that excites components within or outside the cell. By measuring the scattering of the light, various properties of the cells can be quantified, such as granularity or affinity to a particular target \cite{shapiro2005practical}. These properties (or labels) may be explained by features of the cell, such as its gene expression, a relationship that we are interested in modeling. However, because the \textit{order} of the cells as they pass through the laser is unknown, traditional inference techniques cannot be used to associate these labels with features of the cells that are measured through a separate experiment.


\paragraph*{} Shuffled regression is also useful in contexts where the order of measurements is unknown; for example, it arises in signaling with identical tokens \cite{rose2014signaling}, simultaneous pose-and-correspondence estimation in computer vision \cite{david2004softposit}, and in relative dating from archaeological samples \cite{robinson1951method}. A particularly important setting where the feasibility of shuffled regression raises concern is data de-anonymization, such as of public medical records, which are sometimes shuffled to preserve privacy \cite{li2004protection}.

In this paper, we show that it is possible to perform inference in these settings. We propose an intuitive algorithm for shuffled linear regression that is able to estimate the weights $w_0$ with reasonable accuracy without knowledge of the ordering of the labels $y$. The algorithm can can be applied with minor modification to settings where partial ordering information is available in the form of multiple independent experiments that each generate a set of shuffled labels for a set of input features. In fact, we show that these additional experiments or \textit{replications} significantly reduce the inference error. Our approach to shuffled linear regression takes the form outlined in Algorithm \ref{alg:example}, which numerically minimizes an objective function that is, in general, non-convex.

\begin{algorithm}[]
   \caption{Shuffled Linear Regression Using Multi-Start Gradient Descent}
   \label{alg:example}
\begin{algorithmic}
   \STATE {\bfseries Inputs:} features $\boldsymbol{x}_1 \ldots \boldsymbol{x}_R$ and labels $y_1 \dots y_R$ from $R$ replications; loss function $L(\boldsymbol{x},y,w)$; number of starts $s$; step size $\alpha$
   \STATE Initialize $\hat{w}^* = 0$, set $C^* = \infty$
   \FOR{$i=1$ {\bfseries to} $s$}
   \STATE Randomly initialize $\hat{w}$
   \REPEAT
        \STATE set C = 0
      \FOR{$j=1$ {\bfseries to} $R$}
   \STATE Evaluate the loss function, $L$, using
   $\boldsymbol{x}_j$, $y_j$, and $\hat{w}$ 
   \STATE $C \leftarrow C + L$ 
    \ENDFOR
      \STATE Empirically find the gradient of $C$ around $\hat{w}$,  $\nabla C(\hat{w})$
   \STATE Update $\hat{w} \leftarrow \hat{w} - \alpha \nabla C(\hat{w})$ 
   \UNTIL{the change in $C$ between iterations is less than a threshold (e.g. $10^{-6}$)}
   \IF{ $C(\hat{w}) < C^*$}
   \STATE Update $\hat{w}^* \leftarrow \hat{w}, C^* \leftarrow C(\hat{w})$
   \ENDIF
   \ENDFOR
   \STATE return $\hat{w}^*$
\end{algorithmic}
\end{algorithm}

We begin developing this algorithm in Section \ref{section:estimators} by proposing several estimators that minimize a loss function defined on the input features and unordered labels. We analyze the statistical properties of these estimators in low-dimensional settings, and show empirical results in higher dimensions. In Section \ref{section:framework}, we consider three steps that can be taken to improve the accuracy of estimates in real-world applications -- most significantly, replications. Next, in Section \ref{section:experiments}, we demonstrate our algorithm on synthetic and standard datasets, as well as on a real-world application: aptamer evolution. Finally, we discuss the significance of our results (Section \ref{section:discussion}) and offer a conclusion of our work (Section \ref{section:conclusion}). Our code is fully available on GitHub\footnote{\url{
https://github.com/abidlabs/shuffled_stats}} and on the Python Package Index \footnote{Available with: \texttt{pip install shuffled\char`_stats}}.

\paragraph*{Related Work.} There are two inference problems inherent to shuffled regression: estimating the permutation matrix, $\boldsymbol{\pi_0}$, and estimating the weights, $w_0$. Recently, there have been studies on the statistical and computational limits of permutation recovery in shuffled regression \cite{pananjady2016linear} and the theoretical guarantees of the identifiability of weights in the absence of noise \cite{unnikrishnan2015unlabeled}. Neither of these works directly translates to our setting, as we focus on inference of the weights, not the permutation matrix, which have very different statistical limits, and we are concerned with efficient algorithms for estimation in noisy settings. 

There has also been prior work on determining classifiers using proportional labels \cite{stolpe2011learning, yu2013propto}. In this setting, the authors aim to predict discrete classes that label instances belong to from training data that is provided in groups and only the proportion of each class label is known. This can be viewed as the corresponding problem of ``shuffled classification'' and efficient algorithms have been proposed for the task. However, these methods cannot be easily applied when labels are continuous numbers, motivating the need for algorithms for shuffled regression.

\section{Estimators for Shuffled Regression}
\label{section:estimators}

We consider estimators that can be used to perform shuffled linear regression. We prove basic properties of these estimators in lower dimensions, and show empirical performance in higher dimensions. 

The random design setting that we use for this section is one in which the entries of the $i^{\mathrm{th}}$ column of the input matrix $\boldsymbol{x}$ are drawn identically and independently from a Gaussian distribution with mean $\mu_{X_i}$ and variance $\sigma_{X_i}^2$. In the one-dimensional case, or when all columns are drawn from the same distribution, we define $\mu_{X} \equiv \mu_{X_1}$ and $\sigma_{X} \equiv \sigma_{X_1}.$ The noise source is assumed to be a zero-mean Gaussian variable with variance $\sigma_E^2$, and the permutation matrix $\pi_0$ to be chosen at random uniformly from the set of all $n \times n$ permutation matrices.

Throughout this paper, non-bold lower-case letters ($y$, $e$, $n$, $\hat{w}$, $\mu_X$) are used to denote vectors or scalars, bold lower-case letters are used to denote matrices ($\boldsymbol{x}$, $\boldsymbol{\pi_0}$), and upper-case letters are used to denote random variables ($X_2$, $Y$, $E$), unless defined otherwise.

\subsection{LS Estimator}

In standard linear regression, the ordinary least-squares (OLS) estimator is defined as

\begin{equation}
\label{eqn_LS_standard}
\hat{w}_{\text{OLS}} = \arg \min_w \, \lvert \boldsymbol{x} w - y \rvert^2 .
\end{equation}

The natural extension of (\ref{eqn_LS_standard}) in the case of shuffled linear regression is to search for the weights that minimize the least-square distance across all valid $n \times n$ permutation matrices, $\boldsymbol{\pi}$:

\begin{equation}
\label{eqn_LS_shuffled}
\hat{w}_{\text{LS}} = \arg \min_w  \min_{\boldsymbol{\pi}} \, \lvert \boldsymbol{\pi} \boldsymbol{x} w - y \rvert^2 .
\end{equation}

We define this to be the least-squares (LS) estimator for shuffled regression, and it has been studied in prior work for the recovery of the permutation matrix \cite{pananjady2016linear}. Although the OLS estimator is well known to be consistent for standard linear regression, the LS estimator is not consistent for shuffled linear regression, as we prove in Theorem \ref{thm_LS_inconsistency} for $d=1$ (see appendix \ref{appendix:proof1} for proof).

\begin{thm}
\label{thm_LS_inconsistency}
Let $\boldsymbol{x} \in  \mathbb{R}^{n \times 1}$ be sampled from a Gaussian random variable with mean $\mu_X$ and variance $\sigma_X^2$. Let $y = \boldsymbol{\pi_0 x} w_0 + e$ be the product of $\boldsymbol{x}$ with an unknown scalar weight $w_0$ and an unknown $n \times n$ permutation matrix $\boldsymbol{\pi_0}$ added to zero-mean Gaussian noise with variance $\sigma_E^2$. The least-squares estimate of $w_0$, as defined in (\ref{eqn_LS_shuffled}), is inconsistent. In fact, the estimator converges to the following limit with probability 1:
\begin{equation}
\lim_{n \to \infty} \hat{w}_{\mathrm{LS}}  = w_0 \left( \frac{\mu_X^2 + \sigma_X \sqrt{\sigma_X^2 + \frac{\sigma_{E}^2}{w_0^2}}}{\mu_X^2 + \sigma_X^2} \right) \end{equation}
\end{thm}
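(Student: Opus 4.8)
The plan is to first collapse the inner minimization over permutations into an explicit sorting operation, and then analyze the resulting one-dimensional least-squares problem on sorted data via the limiting behavior of order statistics. For a fixed $w$, expanding $\lvert \boldsymbol{\pi} \boldsymbol{x} w - y \rvert^2 = \lvert \boldsymbol{x} w\rvert^2 - 2\,(\boldsymbol{\pi}\boldsymbol{x}w)^\top y + \lvert y \rvert^2$ and using that $\lvert \boldsymbol{\pi}\boldsymbol{x}w\rvert^2 = \lvert \boldsymbol{x}w\rvert^2$ is permutation-invariant shows that minimizing over $\boldsymbol{\pi}$ is equivalent to maximizing the inner product $(\boldsymbol{\pi}\boldsymbol{x}w)^\top y$. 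By the rearrangement inequality this is maximized precisely when $\boldsymbol{\pi}\boldsymbol{x}w$ is sorted in the same order as $y$; assuming $w_0 > 0$ without loss of generality and restricting to the matching branch $w>0$, the optimal permutation aligns the order statistics, so that $\min_{\boldsymbol{\pi}} \lvert \boldsymbol{\pi}\boldsymbol{x}w - y\rvert^2 = \sum_{i=1}^n (w\, x_{(i)} - y_{(i)})^2$, where $x_{(i)}$ and $y_{(i)}$ denote the $i$-th order statistics. Minimizing this quadratic in $w$ gives the closed form
\begin{equation}
\hat{w}_{\mathrm{LS}} = \frac{\sum_{i=1}^n x_{(i)} y_{(i)}}{\sum_{i=1}^n x_{(i)}^2}.
\end{equation}

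Next I would handle the two sums separately. The denominator is permutation-invariant, so $\frac{1}{n}\sum_i x_{(i)}^2 = \frac{1}{n}\sum_i x_i^2 \to \E[X^2] = \mu_X^2 + \sigma_X^2$ almost surely by the strong law of large numbers. The numerator is the crux. Observe that the entries of $y$ are i.i.d.\ draws of the random variable $Y = w_0 X + E \sim \mathcal{N}(w_0 \mu_X,\ w_0^2 \sigma_X^2 + \sigma_E^2)$, and that sorting destroys the original pairing between $\boldsymbol{x}$ and $y$: after sorting, the empirical distributions of the $x_{(i)}$ and of the $y_{(i)}$ depend only on the marginals $F_X$ and $F_Y$. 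Writing the sorted sum as $\frac{1}{n}\sum_i \hat{F}_X^{-1}(i/n)\, \hat{F}_Y^{-1}(i/n)$ in terms of the empirical quantile functions, I would argue that it converges almost surely to the comonotone coupling integral
\begin{equation}
\frac{1}{n}\sum_{i=1}^n x_{(i)} y_{(i)} \longrightarrow \int_0^1 F_X^{-1}(u)\, F_Y^{-1}(u)\, du = \E\!\left[ F_X^{-1}(U)\, F_Y^{-1}(U) \right],
\end{equation}
where $U$ is uniform on $[0,1]$.

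The main obstacle is making this last convergence rigorous: although the empirical quantile functions converge to $F_X^{-1}$ and $F_Y^{-1}$ (Glivenko--Cantelli), the products must be controlled near $u=0$ and $u=1$ because Gaussian quantiles are unbounded. I would therefore establish a uniform-integrability or truncation bound on the contribution of the extreme order statistics, using known moment bounds for Gaussian order statistics to dominate the tails, which licenses passing from the empirical Riemann sum to the integral. Once this is in place the computation is immediate: since both marginals are Gaussian, their quantile functions share the common standard-normal factor $F_X^{-1}(U) = \mu_X + \sigma_X Z$ and $F_Y^{-1}(U) = w_0\mu_X + \sqrt{w_0^2\sigma_X^2 + \sigma_E^2}\, Z$ with $Z = \Phi^{-1}(U) \sim \mathcal{N}(0,1)$, so using $\E[Z]=0$ and $\E[Z^2]=1$,
\begin{equation}
\E\!\left[ F_X^{-1}(U)\, F_Y^{-1}(U) \right] = w_0 \mu_X^2 + \sigma_X \sqrt{w_0^2\sigma_X^2 + \sigma_E^2} = w_0\Big(\mu_X^2 + \sigma_X \sqrt{\sigma_X^2 + \tfrac{\sigma_E^2}{w_0^2}}\Big).
\end{equation}

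Dividing the limiting numerator by the limiting denominator $\mu_X^2 + \sigma_X^2$ then yields the claimed expression for $\lim_{n\to\infty}\hat{w}_{\mathrm{LS}}$, which exceeds $w_0$ whenever $\sigma_E > 0$, establishing inconsistency. The only remaining bookkeeping is to confirm that the positive-$w$ matching branch is indeed the global minimizer, which follows because the reversed (anti-sorted) branch produces a negative correlation between the $x_{(i)}$ and $y_{(i)}$ and hence a strictly larger residual sum of squares for $w_0 > 0$.
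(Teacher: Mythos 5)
Your proposal is correct in outline and shares the paper's skeleton: collapse the inner minimization over permutations into sorting (your rearrangement-inequality argument is equivalent to the paper's Lemma 1, which proves the same fact by an adjacent-swap contradiction), write $\hat{w}_{\mathrm{LS}}$ as an OLS ratio on order statistics, and compute the two limits separately. Where you genuinely diverge is in the crux step, the limit of $\frac{1}{n}\sum_i x_{(i)} y_{(i)}$. The paper (Lemma 2) reduces to standard-normal order statistics, invokes a variance bound $\Var[Z_{(k)}] \lesssim C/\log\log(2n)$ from Boucheron et al., and uses the polarization identity $z_{(i)} z'_{(i)} = -\tfrac{1}{2}\bigl([z_{(i)}-z'_{(i)}]^2 - z_{(i)}^2 - z'^2_{(i)}\bigr)$ to show the cross term tends to $1$. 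You instead pass to empirical quantile functions and argue convergence to the comonotone coupling integral $\int_0^1 F_X^{-1}(u)F_Y^{-1}(u)\,du$, i.e.\ essentially almost-sure convergence of the empirical measures in $2$-Wasserstein distance. Your route is cleaner and more general --- it identifies the limit for arbitrary marginals with finite second moments, with Gaussianity entering only at the final evaluation via the shared factor $Z=\Phi^{-1}(U)$ --- whereas the paper's argument is hands-on and Gaussian-specific. Both approaches leave the same technical debt in the tails: you flag the uniform-integrability/truncation step near $u=0,1$ without carrying it out, and the paper's Chebyshev step likewise only delivers convergence in probability of the averaged squared gap (almost-sure convergence would need a Borel--Cantelli rate), so neither is fully airtight on that point; your explicit acknowledgment of where the difficulty sits is a virtue. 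One nit: the limit exceeds $w_0$ when $\sigma_X$ and $\sigma_E$ are both nonzero (and $w_0>0$), not merely when $\sigma_E>0$.
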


The limit defined in Theorem \ref{thm_LS_inconsistency} is greater in magnitude than $w_0$ whenever $\sigma_X, \sigma_e \ne 0$. This \textit{amplification bias} means that in the presence of a Gaussian noise source, even for arbitrarily large sample sizes, the LS estimator does not converge to the true weight, $w_0$. We have not extended the proof to higher dimensions, but empirical results suggest that the LS estimator is generally inconsistent and the amplification bias persists in that the norm of estimated weights is larger than that of the true weights even for $d>1$.

It may appear at first that using the LS estimator to estimate the weights is computationally intractable, because it requires searching over the space of $n!$ permutation matrices. However, the LS estimator is actually computationally feasible. This is because we can eliminate the optimization over permutations by rewriting (\ref{eqn_LS_shuffled}) as

\begin{equation}
\label{eqn_LS_simplified}
\begin{split}
&L(\boldsymbol{x},y,w) =  \lvert  (\boldsymbol{x} w)^{\uparrow} - y^{\uparrow} \rvert^2
\\
&\hat{w}_{\text{LS}} = \arg \min_w L(\boldsymbol{x},y,w)
\end{split}
\end{equation}

Here, we use the notation $v^{\uparrow}$ to refer to the vector that results from sorting the entries of a vector $v$ in ascending order. (The justification that can rewrite the LS estimator in this form follows from the proof of Lemma \ref{lemma:simplerls}, see appendix \ref{appendix:proof1}). Thus, evaluating the objective function in (\ref{eqn_LS_simplified}) for a given $w$ requires only performing a dot product and sorting the resulting array, which can be done in polynomial time. 

\subsection{SM Estimator}

The LS estimator is not consistent for shuffled regression when $d=1$. Is it possible to devise an estimator that is consistent in this setting? For inspiration, we turn to the method-of-moments (MOM) estimator \cite{hall2005generalized}. MOM generally incorporates both the \textit{self-moments}, such as the mean of $y$ or the variance of the second column of $\boldsymbol{x}$, as well as the \textit{cross-moments}, such as the co-variance of the first column of $\boldsymbol{x}$ and $y$.

In the case of shuffled linear regression, without knowing the mutual ordering between $\boldsymbol{x}$ and $y$, we can only calculate the self-moments of $\boldsymbol{x}$ and $y$. Thus, the self-moments (SM) estimator, $\hat{w}_\text{SM}$ (or simply $\hat{w}$ in this section), estimates $w_0$ by constraining moments of $\boldsymbol{x} \cdot w$ to equal the respective moments of $y$. The moments involved in these constraints depend on the value of $d$, and we discuss specific cases below.

\subsubsection{$d=1$}

For $d=1$, $\boldsymbol{x}$ is one-dimensional (there is no separate intercept term). So we require only one constraint to estimate $w_0$, which is a scalar. We can thus write down the following constraint for $\hat{w}_{\text{SM}}$, based on the sample means:
\begin{equation}
\label{eqn_sm_d1}
\frac{1}{n} \sum_{i}^n x_i \hat{w}_{\text{SM}} = \frac{1}{n} \sum_{i}^n y_i \implies \hat{w}_{\text{SM}} = \frac{\sum_{i}^n y_i}{\sum_{i}^n x_i},
\end{equation}

where $x_i$ is used to refer to the $i^{th}$ entry of the vector $\boldsymbol{x}$, and the same for $y_i$ and $y$. Theorem \ref{thm_sm_consistency} guarantees that with a mild condition on the mean of $\boldsymbol{x}$, the SM estimator is unbiased and consistent.

\begin{thm}
\label{thm_sm_consistency}
Let $\boldsymbol{x} \in  \mathbb{R}^{n \times 1}$ be sampled from a Gaussian random variable with mean $\mu_X$ and variance $\sigma_X^2$. Let $y = \boldsymbol{\pi_0 x} w_0 + e$ be the product of $\boldsymbol{x}$ with an unknown scalar weight $w_0$ and an unknown $n \times n$ permutation matrix $\boldsymbol{\pi_0}$ added to zero-mean Gaussian noise with variance $\sigma_E^2$. The SM estimator, as defined in (\ref{eqn_sm_d1}), is consistent when $\mu_X \ne 0$. In other words, with probability 1,
\begin{equation*}
\label{consistent_sm}
\lim_{n \to \infty} \hat{w}_{\mathrm{SM}}  = w_0. \end{equation*}
Furthermore, given a fixed sample vector $x$, the SM estimator is unbiased as long as as $\bar{x} \equiv \frac{1}{n}\sum_i x_i \ne 0$, meaning that:
\begin{equation*}
\label{unbiased_sm}
 \E[\hat{w}_{\mathrm{SM}} - w_o] = 0.
\end{equation*}
\end{thm}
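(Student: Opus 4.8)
The plan is to exploit a single structural fact that makes shuffled regression tractable for the sample mean: the sum of a vector's entries is invariant under any permutation of those entries. This observation eliminates the unknown permutation $\boldsymbol{\pi_0}$ from the problem entirely, reducing both claims to routine statements about the noise. Concretely, I would begin by summing the model $y = \boldsymbol{\pi_0 x} w_0 + e$ entrywise to obtain $\sum_i y_i = w_0 \sum_i (\boldsymbol{\pi_0 x})_i + \sum_i e_i$. Since $\boldsymbol{\pi_0}$ only reorders the entries of $\boldsymbol{x}$, we have $\sum_i (\boldsymbol{\pi_0 x})_i = \sum_i x_i$, so substituting into the definition of the estimator in (\ref{eqn_sm_d1}) gives the clean identity
\[
\hat{w}_{\text{SM}} = \frac{\sum_i y_i}{\sum_i x_i} = w_0 + \frac{\sum_i e_i}{\sum_i x_i}.
\]
Both parts of the theorem then follow from controlling the residual term $\frac{\sum_i e_i}{\sum_i x_i}$, so this identity is the heart of the argument.

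For the unbiasedness claim, I would condition on the fixed sample vector $x$, so that the denominator $\sum_i x_i = n\bar{x}$ is a nonzero deterministic constant, and take expectation over the noise. By linearity of expectation and the assumption that each $e_i$ is zero-mean, the residual term has expectation $\frac{\sum_i \E[e_i]}{\sum_i x_i} = 0$, which immediately yields $\E[\hat{w}_{\text{SM}} - w_0] = 0$ whenever $\bar{x} \ne 0$. For the consistency claim, I would instead normalize numerator and denominator by $n$ and write the residual as the ratio of the sample means $\frac{1}{n}\sum_i e_i$ and $\frac{1}{n}\sum_i x_i$. The strong law of large numbers sends the former almost surely to $\E[E] = 0$ and the latter almost surely to $\mu_X$; provided $\mu_X \ne 0$, Slutsky's theorem (equivalently, the continuous mapping theorem for division at a nonzero point) gives that the ratio converges almost surely to $0/\mu_X = 0$, and hence $\hat{w}_{\text{SM}} \to w_0$ with probability one.

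I do not expect a genuinely hard step, since the permutation-invariance of the sum trivializes what would otherwise be a combinatorial obstruction. The only subtlety worth flagging is the asymmetry between the two statements: unbiasedness is asserted \emph{conditionally} on a fixed $x$, which is precisely what makes the expectation of the ratio well defined, since the denominator is then deterministic. Taking an unconditional expectation over random $x$ would be delicate, as $\sum_i x_i$ can be arbitrarily close to zero and $\E[1/\sum_i x_i]$ need not exist; the consistency statement sidesteps this entirely by working with almost-sure limits rather than expectations, which is why the mild condition $\mu_X \ne 0$ suffices there.
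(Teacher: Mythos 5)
Your proposal is correct, and the first half (the permutation-invariance identity $\hat{w}_{\mathrm{SM}} = w_0 + \frac{\sum_i e_i}{\sum_i x_i}$ and the conditional unbiasedness argument) coincides with the paper's proof. Where you genuinely diverge is the consistency step. The paper proves consistency by computing the mean-squared error: it factors $\E\bigl[(\sum_i e_i)^2/(\sum_i x_i)^2\bigr]$ into $\frac{\sigma_E^2}{n}\cdot\E\bigl[1/\bar{x}^2\bigr]$ and then argues informally that $\bar{x}$ concentrates at $\mu_X$, so the MSE behaves like $\sigma_E^2/(\mu_X^2 n)\to 0$. You instead apply the strong law of large numbers to $\frac{1}{n}\sum_i e_i$ and $\frac{1}{n}\sum_i x_i$ separately and conclude via the continuous mapping theorem for division at $(0,\mu_X)$ with $\mu_X\ne 0$. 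Your route is preferable on two counts. First, the theorem asserts convergence \emph{with probability 1}, and an MSE-to-zero argument only yields convergence in probability, whereas the SLLN delivers the almost-sure statement directly. Second, the quantity $\E\bigl[1/\bar{x}^2\bigr]$ that appears in the paper's computation is actually infinite for every finite $n$ when $\bar{x}$ is Gaussian (the integrand $1/z^2$ is non-integrable against a density that is positive at $z=0$), so the paper's MSE chain is only heuristic; the caveat you flag about unconditional expectations of $1/\sum_i x_i$ is precisely the issue, and your almost-sure argument sidesteps it cleanly.
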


As a result of Theorem \ref{thm_sm_consistency} (see Appendix \ref{appendix:proofthm2} for proof), when $d=1$, the SM estimator often performs better than the LS estimator.  In fact, it achieves an efficiency similar to that of the OLS estimator \textit{with known ordering information}. The empirical errors of the the LS estimator, the SM estimator, and the OLS estimator are shown for various values of $n$ in Figure \ref{fig:figure2}. The SM estimator also has computational advantages over the LS estimator, as $\hat{w}$ can be calculated analytically, requiring only a single summation over the input and label vectors, which can be done in linear time. But what if $d>1$ (or $\bar{x}=0$)? We turn to higher moments.

\begin{figure}[]
\centering

\includegraphics[width=0.6\columnwidth]{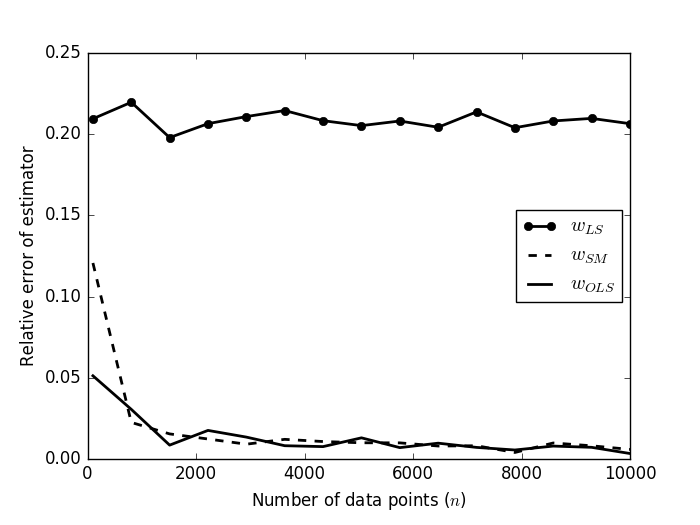}
\caption{An illustration of the (in)consistency and efficiency of the LS, SM, and OLS estimators for $d=1$. Here, $\sigma_E^2 = 1$, $w_0$ is fixed to $[\,1\,]$, and the entries in $\boldsymbol{x}$ are generated from the Gaussian distribution, $N(1,1)$. The relative error is defined as $\lvert \hat{w} - w_0 \rvert_2/\lvert w_0 \rvert_2$. The LS estimator is not consistent. The SM estimator is consistent, and converges almost as quickly (without ordering information) as the OLS estimator (with ordering information).} 
\label{fig:figure2}
\end{figure}

\subsubsection{$d=2$}
\label{section:smd2}

When $d=2$, the constraint defined in (\ref{eqn_sm_d1}) is no longer enough to uniquely identify $\hat{w}$, as there are now 2 separate variables that need to be solved for: $\hat{w}_1$ and $\hat{w}_2$. We can add another constraint by looking at the next higher self-moment of $y$ and the columns of $\boldsymbol{x}$. By considering both the first and second moments, we write:

\begin{equation}
\label{eqn_sm_d2}
\begin{split}
\frac{1}{n} \sum_{i}^n x_i \hat{w} =& \frac{1}{n} \sum_{i}^n y_i \\ \frac{1}{n} \sum_{i}^n (x_i \hat{w})^2 =& \frac{1}{n} \sum_{i}^n y_i^2, 
\end{split}
\end{equation}
where $x_i$ now refers to the $i^{th}$ row of matrix $\boldsymbol{x}$. The system of equations can also be solved analytically to provide a closed-form solution for $\hat{w}$, computable with a running time that is linear in $n$. This solution is provided in appendix \ref{appendix:analyticalsm2}. These constraints actually generally yield two solutions for $\hat{w}$, which can be disambiguated by examining even higher moments or using the LS estimator, if needed. 

However, the SM estimator is no longer consistent for $d=2$, because the second moment of $y$ now includes a contribution from the variance of the noise source. To adjust for this, we can incorporate the noise variance into our model (see Section \ref{subsection:noise}) if it is known or if it can be measured.

\subsubsection{$d > 2$}

For $d>2$, there are $d$ separate unknown linear weights, so we must write $K \ge d$ equations, each one incorporating a higher moment. The $K$ equations are, for $1 \le k \le K$, constraints of the following form:

\begin{equation}
\label{eqn_na_sm_dgt2}
\underbrace{\frac{1}{n}\sum_{i}^n (x_i w)^k}_{M_k} = \underbrace{\frac{1}{n}\sum_{i}^n y_i^k}_{N_k}. 
\end{equation}

For convenience, we denote the relevant expression for the $k^\mathrm{th}$ moment of $y$ in (\ref{eqn_na_sm_dgt2}) as $N_k$ and the expression for the  $k^\mathrm{th}$ moment of $\boldsymbol{x} \cdot \hat{w}$ as $M_k$. It is generally not possible to solve these $D$ equations analytically; in fact, there may not be a solution to the system. As a result, we instead write a single cost function that minimizes the extent that each of the $D$ constraints are violated:

\begin{equation}
\label{eqn_cost_function}
\begin{split}
&L(\boldsymbol{x},y,w) = \sum_{k=1}^K f(k)(M_k - N_k)^2\\
& \hat{w}_{\text{SM}} = \arg \min_w L(\boldsymbol{x},y,w).
\end{split}
\end{equation}

Here, the choice of the function $f(k)$ affects the relative contribution from each moment condition. One possible function is $f(k) = k!^{-1}$, which weighs each moment inversely proportional to the expected variance of the $k^{th}$ sample moment of samples taken from a Gaussian distribution \cite{lancaster1970advanced}. Once we have specified (\ref{eqn_cost_function}), we minimize it across $\hat{w}$, using standard numerical optimization techniques, as described in Section \ref{subsection:algorithm}.

As is the case with $d=2$, this estimator is not consistent. To make it consistent, we need to incorporate moments of the noise source; this is discussed further in Section \ref{subsection:noise}. Furthermore, we note that higher sample moments are likely to exhibit significant variance \cite{lancaster1970advanced}, reducing the efficiency of the SM estimator for increasing values of $d$. As a result, we find that the SM estimator can have the same or worse efficiency than the LS estimator when $d$ is even just $3$, as illustrated in Figure \ref{fig:figure2b}.

\begin{figure}[]
\centering

\includegraphics[width=0.6\columnwidth]{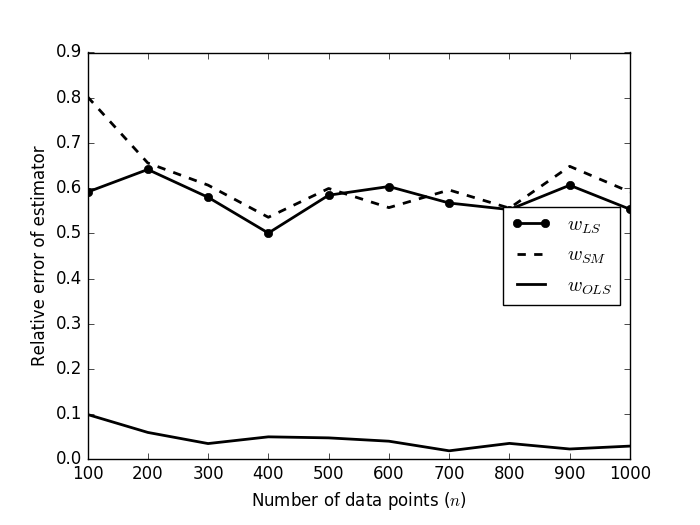}
\caption{An illustration of the (in)consistency and efficiency of the LS, SM, and OLS estimators for $d=3$. Here, $\sigma_E^2 = 1$, $w_0$ is fixed to $[\,1\,,\,1\,,1\,]$, and the entries in $\boldsymbol{x}$ are generated from the Gaussian distribution, $N(1,1)$. The relative error is defined as $\lvert \hat{w} - w_0 \rvert_2/\lvert w_0 \rvert_2$. Neither the LS estimator nor the SM estimator are consistent, and exhibit similar inference errors.}
\label{fig:figure2b}
\end{figure}

\subsection{Hybrid Estimators Using Projections}

The SM estimator has the advantage of being consistent (for $d=1$ and, with knowledge of the noise characteristics, for higher dimensions as well). However, in the higher-dimensional setting, it exhibits significant error because it requires computing higher sample moments. In this setting, the LS estimator is more efficient. We can construct a hybrid estimator that has advantages of both. 

We do this by projecting $\boldsymbol{x}$ to a lower dimension, and then using the SM estimator. For example, let us consider the specific case of projecting $\boldsymbol{x}$ to 1-dimensional space, by using a $d \times 1$ projection matrix, $p$, before using the SM estimator. The result is an estimate $\tilde{w}$ defined by

\begin{equation}
\label{projection_p1}
\frac{1}{n} \sum_{i}^n x_i p \tilde{w} = \frac{1}{n} \sum_{i}^n y_i \implies \tilde{w} = \frac{\sum_{i}^n y_i}{\sum_{i}^n x_i p},
\end{equation}

which can be embedded in the original $d$-dimensional space using the same matrix $p$ to produce $\hat{w}_{\text{P1}} \equiv p \tilde{w}$. Note that $\hat{w}_{\text{P1}}$ satisfies the first-moment condition, namely that the mean of $\boldsymbol{x} \cdot \hat{w}_{\text{P1}}$ is the same as the mean of $y$, since:

\begin{equation}
\label{projection_p1_mean} \frac{1}{n} \sum_{i}^n x_i \hat{w}_{\text{P1}} = \frac{1}{n} \sum_{i}^n x_i p \tilde{w} = \frac{1}{n} \sum_{i}^n y_i
\end{equation}

However, $\hat{w}_{\text{P1h}}$ may still be a poor estimate of $w_0$ -- it depends entirely on the choice of the projection matrix $p$. This suggests searching over the \textit{projection matrix}, instead of directly over the weights, and then using the criterion of minimizing the least-squares difference between $(\boldsymbol{x} \cdot \hat{w}_{\text{P1}})^{\uparrow}$ and $y^{\uparrow}$ to determine the best $p$.

This approach, which we refer to as the P1 estimator, effectively applies the LS estimator \textit{only} to those weights which satisfy at least the first-moment condition. We can similarly extend this approach to the second moment by projecting to 2 dimensions (denote this as the P2 estimator), and so on. The general approach is outlined in Algorithm \ref{alg:projection}.

\begin{algorithm}[]
   \caption{Hybrid Estimators Using Projections}
   \label{alg:projection}
\begin{algorithmic}
   \STATE {\bfseries Input:} $d$-dimensional features $\boldsymbol{x}$, labels {y}, projection dimension $d_p$.
   \STATE Initialize $\hat{w}^* = 0, L^* = \infty$
   \STATE Randomly initialize a $d \times d_p$ matrix $p$
   \REPEAT
   \STATE Project $\boldsymbol{x}$ to $d_p$-dimensional space: $\tilde{\boldsymbol{x}} \equiv \boldsymbol{x} \cdot p$
   \STATE Calculate $\tilde{w}$ using the SM estimator applied to $\tilde{\boldsymbol{x}}$ and $y$. 
   \STATE Let $\hat{w} \equiv p \cdot \tilde{w}$ be the higher-dimensional embedding of the weights
   \STATE Let $L = \lvert  (\boldsymbol{x} \cdot \hat{w})^{\uparrow} - y^{\uparrow} \rvert^2$ be the squared loss
    \IF{ $L < L^*$}
   \STATE Update $\hat{w}^* \leftarrow \hat{w}, L^* \leftarrow L$
   \ENDIF
   \STATE Choose a new $p$ (by gradient descent, randomly, etc.)
   \UNTIL{the change in $L^*$ is below a threshold}
   \STATE return $\hat{w}^*$
\end{algorithmic}
\end{algorithm}

\subsection{Regimes of Efficient Operation}
\label{section:regimes}

We have discussed several estimators for shuffled regression. Which one should we use? Empirical simulations suggest that the answer depends on $n$ and $d$, as well as the level of noise in the system. Simulations across a variety of noise levels suggest that the SM estimator works well for lower dimensions and larger sample sizes, while for higher dimensions, the P1 and LS estimators work better. The SM and P1 estimators are compared in Figure \ref{fig:regimes1}, and more details are provided in Appendix \ref{appendix:regimes}.

\begin{figure}[]
\centering
\includegraphics[width=0.7\columnwidth]{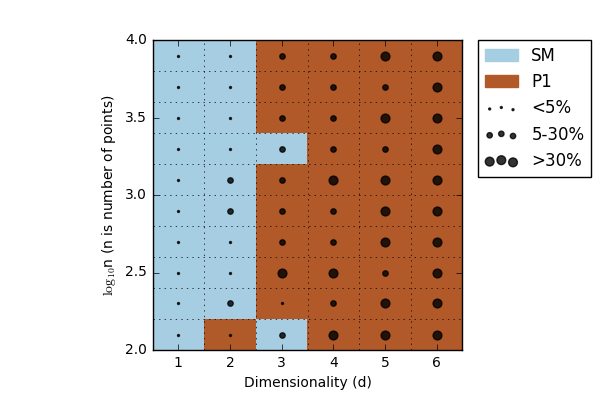}
\caption{For different values of the sample size $n$ and dimensionality $d$ the estimator with the better performance (SM or P1) is shown. Specifically, at each operating point, 5 simulations were run with an SNR fixed to 15 dB, and the estimator with the lower mean inference error is shown. The magnitude of the error is indicated by the size of the black dot at the center of each cell. The LS estimator, not shown, performs similarly to the P1 estimator (see Appendix \ref{appendix:regimes} for more details).}
\label{fig:regimes1}
\end{figure}

Finally, let us note that the LS, SM, and hybrid estimators are specific examples of a general class of estimators that may be used for shuffled regression. For reasons that we describe in Appendix \ref{appendix:order-invariant}, we refer to the general class of estimators as \textit{order-invariant estimators}. In the appendix, we provide several other examples, and explain why we chose to analyze these specific cases.

\section{A Robust Framework}
\label{section:framework}

Although the estimators described in Section \ref{section:estimators} can be used directly to perform linear regression, we can often improve the accuracy of the estimates by taking into account additional information known about the data. In this section, we describe three techniques to improve the robustness of the shuffled linear regression by taking advantage of partial ordering information, characteristics of the noise, and sparsity.

\subsection{Partial Ordering Information}
\label{subsection:partial}

In weakly-supervised settings, partial ordering information is sometimes available in the form of \textit{replications}. Each replication associates a set of input features to a set of labels, but within a replication, the mutual ordering information between the inputs and labels is unknown. This frequently arises, for example, when multiple rounds of a population-level experiment are conducted, and each experiment generates one set of inputs and outputs.

Having partial ordering information significantly improves the accuracy of shuffled linear regression, because we can write loss functions for each replication independently, and then take their sum to produce a single loss function that is more robust. For example, we can adapt the SM estimator by rewriting the loss function in (\ref{eqn_cost_function}) as: 

\begin{equation}
\label{eqn:cluster_loss}
\begin{split}
& L(\boldsymbol{x},y,w) = \sum_{r=1}^R \sum_{k=1}^K f(k)(M_{r,k} - N_{r,k})^2, \text{for} \\
&  M_{r,k} \equiv \frac{1}{|I_r|} \sum_{i \in I_r}  ( x_i w)^k
  \\
& N_{r,k} \equiv \frac{1}{|I_r|} \sum_{i \in I_r} y_i^k
\end{split}
\end{equation}
where $I_r$ represents the set of points in the $r^{\mathrm{th}}$ replication (out of a total of $R$ replications). 

The SM estimator improves with more replications because it reduces the need for higher-order sample moments in order to uniquely estimate $w_0$. In other words, increasing $R$ allows one to decrease $K$. Since it is the higher sample moments that are more variable, the estimator defined by (\ref{eqn:cluster_loss}) for a larger $R$ is generally more efficient. In a similar way, the LS estimator can be written by minimizing the sum of squared differences across separate replications, and empirical results show similarly increased accuracy with more replications. 

In Figure \ref{fig:figure3}, we show the effect of increasing the number of replications by plotting the inference error of the SM estimator as a function of the number of replications. To eliminate the confounding effect of increasing number of $n$, we have kept $n$ fixed, and only changed the numbers of replications that the data has been randomly partitioned into. Further experiments in Section \ref{subsection:standard} examine the effect of replications on inference error in real-world datasets. 

We also show the effect of partial information on the regimes of efficient operation. In Figure \ref{fig:figure2-2}, we present empirical results for the optimal estimator (between the SM and P1) for different values of $d$ and $R$, based on simulations performed at various noise levels. Results suggest that as the number of replications increases, the SM estimator begins to outperform the P1 estimator for a fixed value of $d$.

\begin{figure}[]
\centering
\includegraphics[width=0.7\columnwidth]{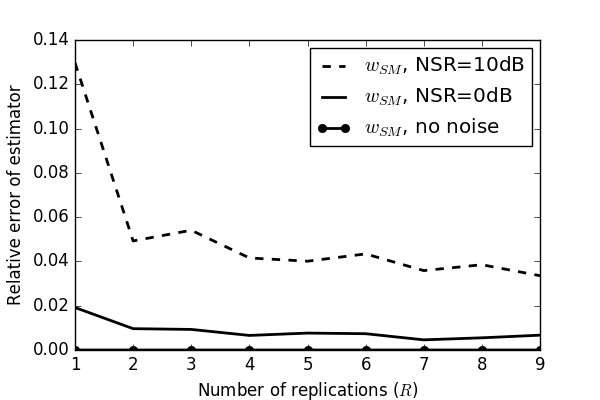}
\caption{An illustration of the accuracy of the standard SM estimator for various noise levels, as the number of replications increases. Here, $d=4$, $n=1000$, $X \sim N(1,1)$, and $w_0$ is fixed to $[1, 1, 1, 1]$. The relative error is defined the same as in previous figures, and the figure shows the mean error over 10 iterations. Here, the noise-to-signal ratio (NSR) is defined as $20 \log_{10}{(\sigma_E/|w_0|_2)}$. Even at relatively high noise levels, increasing the number of replications significantly reduces the error.}
\label{fig:figure3}
\end{figure}

\begin{figure}[]
\centering
\includegraphics[width=0.7\columnwidth]{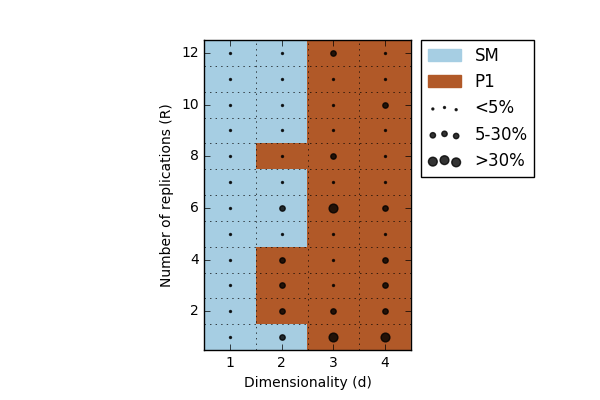}
\caption{For different values of the dimensionality $d$ and number of replications $R$, the estimator with the lower inference error (SM or P1) is shown. Specifically, at each operating point, 5 simulations were run, with the SNR set to 15 dB and the number of points in each replication set to 100. The magnitude of the error is indicated by the size of the black dot at the center of each cell. The LS estimator performed similarly to the P1 estimator (see Appendix \ref{appendix:regimes} for more details).}
\label{fig:figure2-2}
\end{figure}

\subsection{Noise Characteristics}
\label{subsection:noise}

In Section \ref{section:smd2}, we mentioned that it is possible to make the SM estimator consistent for $d=2$ if the moments of the noise have been characterized. Specifically, we include the variance of the noise in second-moment constraint of the estimator, as follows: 

\begin{equation}
\label{eqn_na_sm_d2}
\frac{1}{n} \sum_{i}^n \left( (x_i \hat{w})^2 + \sigma_E^2 \right) = \frac{1}{n} \sum_{i}^n y_i^2, 
\end{equation}

We can do a similar modification of the loss function when $d>2$, by considering the contribution of each moment of the noise source, $E$, to each sample moment of $(\boldsymbol{x} \cdot \hat{w})$. Here, $E$ is the random variable that models the noise. It can be shown (see Appendix \ref{appendix:noise}) the result is $D$ constraints of the following form, for $1 \le k \le K$:

\begin{equation}
\label{}
\underbrace{\frac{1}{n}\sum_j^k \sum_{i}^n \binom{k}{j} ( x_i \hat{w})^j \E[E^{k-j}] 
}_{M_k} = \underbrace{\frac{1}{n}\sum_{i}^n y_i^k}_{N_k} 
\end{equation}

When are the moments of the noise source known? In some cases, the characteristics of noise may be known \textit{a priori}, but in other cases, the additive noise in a system can be characterized by a carefully designed input sample. As a simple example, by running an experiment with all identical samples, one may look at the sample moments of the labels to infer the moments of $E$.

By adjusting for noise, one can significantly improve the performance of the standard SM estimator (as well as some hybrid estimators, like P2), especially when the noise levels are on the same order of magnitude as the signal in our system. This is shown in Figure \ref{fig:figure4}, where the relative error of the both the standard and noise-adjusted self-moments (NA-SM) estimator are plotted as a function of the noise-to-signal ratio in the system.

\begin{figure}[]
\centering
\includegraphics[width=0.7\columnwidth]{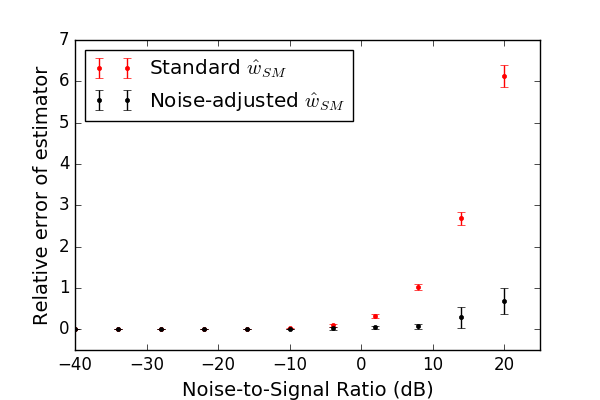}
\caption{Here, we compare the performance of the SM and NA-SM estimators, in the case of $d=2$, $n=1000$, and $w_0$ fixed to $[1, -1]$. NSR and relative error are defined the same as in previous figures. The NA-SM and SM estimators have similar relative errors, until the magnitude of the noise approximates that of the signal (at an NSR of 0 dB). At higher noise levels, the NA-SM estimator preforms significantly better than the SM estimator.}
\label{fig:figure4}
\end{figure}

\subsection{Sparsity and Regularization}
\label{subsection:regularization}

In many real-world datasets, there are features in $\boldsymbol{x}$ that do not contribute to $y$; in other words, there is sparsity in the weight vector, $w_0$. Just as in classical linear regression, to prevent overfitting these extra weights, regularization can be added to the model, at the expense of making the estimator more biased \cite{cucker2002best}.

As such, when $w_0$ is suspected to be sparse, it helps to impose $L_1$ or $L_2$ regularization to improve the estimation of the weights. The cost function can be simply adjusted by the addition of a norm-squared term to include regularization. For example, the loss function for the SM estimator in (\ref{eqn:cluster_loss}) can be adjusted to include $L_2$ regularization:

\begin{equation}
\label{eqn_robust_loss}
L(\boldsymbol{x},y,w) = \lambda_2|w|^2_2 + \sum_{r=1}^R \sum_{k=1}^K f(k)(M_{r,k} - N_{r,k})^2
\end{equation}

Similar adjustments can be made to the LS and hybrid estimators. Fig. \ref{fig:regularization} shows that $L_2$ regularization can significantly improve error estimation, particularly as the sparsity of $w_0$ increases. When $w_0$ is not sparse, however, regularization has a slightly detrimental effect, as it increases the bias of the estimator.

\begin{figure}[t]
\centering
\includegraphics[width=0.7\textwidth]{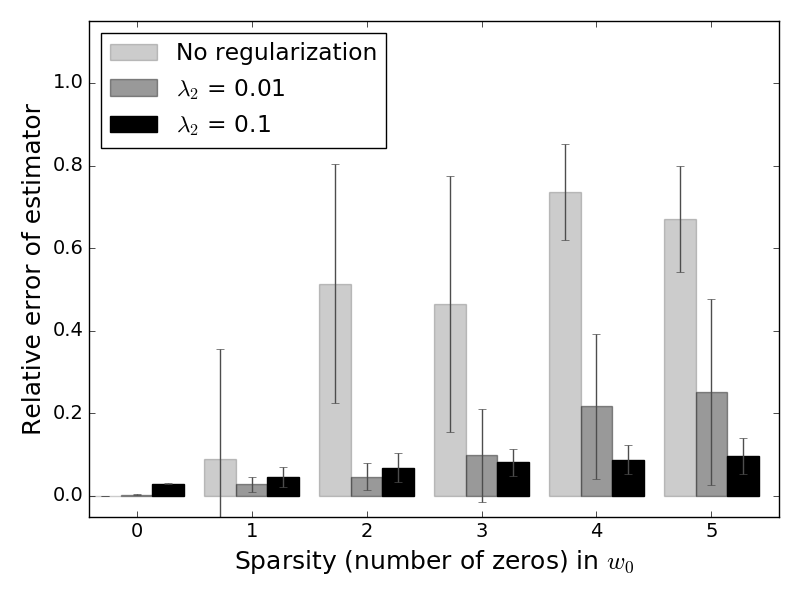}
\caption{This figure shows the effect of different levels of $L_2$ regularization on relative error. Here, $n=1000$, and $w_0$ consists of 2 weights of $1$ and $1$, while the rest of the weights are zero. As such, the dimensionality $d$ is 2 more than the sparsity in each case. The SM estimator with regularization parameter $\lambda_2$ is used to calculate $\hat{w}$. As the sparsity (and number of features) increase, the relative error increases, but less so when $\lambda_2=0.01$, and significantly less so when $\lambda_2$ is increased to $0.1$.}
\label{fig:regularization}
\end{figure}

\subsection{Numerical Algorithm for Robust Shuffled Regression}
\label{subsection:algorithm}

In previous subsections, we have introduced loss functions for each of the SM, LS, and hybrid estimators, and we have modified these loss functions in specific ways to incorporate regularization, partial ordering information, and knowledge of the noise source. The result is, for each estimator, a loss function in the weights, $w$. To solve for optimal weights, we simply minimize the particular loss function -- e.g. (\ref{eqn_LS_simplified}) or (\ref{eqn_cost_function}). 

Since the functions are, in general, non-convex, we use a multi-start strategy with gradient descent. Algorithm \ref{alg:example}, introduced in Section \ref{section:intro}, presents a numerical algorithm for shuffled regression that can be used for robust inference on real-world datasets. Variations of this algorithm (e.g. with or without replications) were used to perform the experiments in Section \ref{section:experiments}.

\section{Experiments}
\label{section:experiments}

We conducted two experiments to demonstrate the feasibility of shuffled regression on real-world datasets. In the first experiment, we applied shuffled regression to standard regression datasets with dimensionality ranging from $2 \le d \le 9$. We randomly permuted the labels in these datasets to simulate shuffled data, and measured the relative error between the weights we estimated on the shuffled data and those weights estimated by ordinary least squares before shuffling the labels. 

The second experiment is based on a potential application of shuffled regression. We explored whether shuffled linear regression can be used to infer the \textit{qualitative} effect of different nucleotide bases and motifs (sequences of bases) on the affinity of a DNA sequence to a particular target. The dataset consisted of the affinities of 1200 DNA sequences, measured across 5 independent experiments. This experiment was in a higher-dimensional setting ($d=21$). All experimental datasets can be found in the associated GitHub repository (see end of Section \ref{section:intro}).

\subsection{Standard Datasets}
\label{subsection:standard}

We applied the estimators developed in the previous sections to seven regression datasets from the UCI repsoitory\footnote{\url{
https://archive.ics.uci.edu/ml/}} and MATLAB\footnote{\url{https://www.mathworks.com/help/stats/_bq9uxn4.html}}. The datasets were chosen to have real-valued features and most had $R^2$ values of at least 0.8, which provided some confidence that the data was generated from a linear source, albeit one that was noisy. Three of the seven datasets were modified to remove outliers until the $R^2$ value reached a value of at least 0.8. Additionally, six synthetic datasets were generated separately to provide further validation of the method. All of the datasets were normalized so that $y$ and each column of $\boldsymbol{x}$ ranged between 0 and 1 (except the bias column, which was identically 1). Properties of all 13 normalized datasets are listed in Table \ref{table1}. 

Following \cite{yu2013propto}, we explored the effect of replications by randomly splitting the data into 1, 2, 4, 6, and 8 replications, and within each replication, shuffling the labels ourselves to simulate a weakly-supervised setting. We then used algorithm $\ref{alg:example}$ (without regularization or adjusting for noise) to estimate weights, $\hat{w}$. The choice of estimator depended on the values of $d$ and $R$, based on our earlier analysis of the optimal regimes of operation. Specifically, we used the SM estimator when $d=1,2$ or when $R \ge 3d$. Otherwise, we used the P1 estimator.

Finally, we computed the relative error by comparing the estimated weights to the weights inferred by ordinary least-squares linear regression (applied to the unshuffled data). The relative error is defined as $\lvert \hat{w} - \hat{w}_{OLS} \rvert_2/\lvert \hat{w}_{OLS} \rvert_2$. These results, averaged across 10 trials of shuffled regression, are shown in Table \ref{table2}. 

To provide a baseline against which to compare these results, we also ran standard linear regression with a random permutation of the labels within each replication. This baseline ensured that we were not partitioning the data into too many replications, effectively disambiguating the ordering of the labels. The results of the baseline experiment are found in Table \ref{table-nc} in Appendix \ref{appendix:negative-control}, and indicate that the ordering of the labels remains ambiguous, even for $R=8$, across all datasets. 

\begin{table}[]

\caption{This table lists the names and basic properties of the datasets that were used to validate shuffled linear regression in the experiment described in Section \ref{subsection:standard}. The name of the dataset corresponds to the name found in either the UCI or MATLAB repository. The $R^2$ value is goodness-of-fit that results when ordinary linear regression is applied to the unshuffled labels. $\hat{\sigma}_E$ is an estimate of the standard deviation of the noise source, computing by taking the standard deviation of the residuals. (*Outliers were removed from the three datasets marked with an asterisk to improve linearity.)} \label{table1}
  \vspace{3mm}

\begin{center}
 \begin{tabu} to 0.9\columnwidth { | X[l]  X[c]  X[c]  X[c]  X[c] |}

 \hline
 Dataset &  Features ($d$) & Size ($n$) & $R^2$
 & $\hat{\sigma}_E$
 \\ 
 \hline\hline

lsat & 2 & 15 & 0.6 & 0.21 \\
\hline
accidents & 2 & 51 & 0.84 & 0.08 \\
\hline
acetylene & 4 & 16 & 0.92 & 0.09 \\
\hline
power plant & 5 & 9568 & 0.93 & 0.06 \\
\hline
airfoil$^*$ & 6 & 1043 & 0.83 & 0.06 \\
\hline
yacht$^*$ & 7 & 134 & 0.88 & 0.05 \\
\hline
concrete$^*$ & 9 & 559 & 0.91 & 0.05 \\
\hline

\hline
synthetic1 & 3& 50 & 0.93 & 0.06 \\ \hline
synthetic2 & 4& 100& 0.99&0.02\\ \hline
synthetic3 & 5& 200& 0.90&0.05\\ \hline
synthetic4 & 6& 400& 0.98&0.03\\ \hline
synthetic5 & 7& 400& 0.94&0.04\\ \hline
synthetic6 & 7& 800& 0.81&0.07\\ \hline
  
\end{tabu}
\end{center}
\end{table}

\begin{table}[h]

\caption{This table represents the results from applying the shuffled regression  estimator to the data. Each cell represents the relative error, as defined earlier, between the OLS weights and the estimated weights. The mean (over 10 trials) is reported. Errors of less than 0.3 are shaded in green, between 0.3 and 1.0 in yellow, and above 1.0 in red. } \label{table2}
  \vspace{3mm}
  \centering

 \begin{tabu} to 0.9\columnwidth { | p{2cm} | X[c] | X[c] | X[c] | X[c] | X[c]|}
    

\cline{2-6}
\multicolumn{1}{c}{} & \multicolumn{5}{|c|}{\textbf{Number of Replications} ($R$) } \\

    \hline
    \textbf{Dataset} & 1 & 2 & 4 & 6 & 8\\\hline 
    
lsat & \cellcolor{yellow!25} 0.31 & \cellcolor{green!15} 0.20 & \cellcolor{yellow!25} 0.54 
& \cellcolor{green!15} 0.27 & \cellcolor{green!15} 0.16 
\\
\hline
accidents & \cellcolor{green!15} 0.09 & \cellcolor{green!15} 0.04 & \cellcolor{green!15} 0.03 & \cellcolor{green!15} 0.13 & \cellcolor{green!15} 0.21 \\
\hline
acetylene & \cellcolor{red!15} 1.58 & \cellcolor{yellow!25} 0.66 & \cellcolor{green!15} 0.25 & \cellcolor{green!15} 0.11 & \cellcolor{green!15} 0.04 \\
\hline
power plant & \cellcolor{yellow!25} 0.34 & \cellcolor{yellow!25} 0.33 & \cellcolor{yellow!25} 0.33 & \cellcolor{yellow!25} 0.32 & \cellcolor{yellow!25} 0.33 \\
\hline
airfoil & \cellcolor{yellow!25} 0.69 & \cellcolor{yellow!25} 0.40 & \cellcolor{yellow!25} 0.36 & \cellcolor{green!15} 0.20 & \cellcolor{green!15} 0.15 \\
\hline
yacht & \cellcolor{red!15} 10.00 & \cellcolor{red!15} 8.11 & \cellcolor{red!15} 1.15 & \cellcolor{green!15} 0.09 & \cellcolor{green!15} 0.11 \\
\hline
concrete & \cellcolor{red!15} 1.20 & \cellcolor{yellow!25} 0.84 & \cellcolor{yellow!25} 0.66 & \cellcolor{yellow!25} 0.66 & \cellcolor{yellow!25} 0.47 \\
\hline
\hline

synthetic1 & \cellcolor{green!15} 0.22 & \cellcolor{green!15} 0.14 & \cellcolor{green!15} 0.05 & \cellcolor{green!15} 0.04 & \cellcolor{green!15} 0.03 \\ \hline
synthetic2 & \cellcolor{green!15} 0.17 & \cellcolor{green!15} 0.02 & \cellcolor{green!15} 0.02 & \cellcolor{green!15} 0.02 & \cellcolor{green!15} 0.01 \\\hline
synthetic3 & \cellcolor{red!15} 1.24 & \cellcolor{yellow!25} 0.32 & \cellcolor{green!15} 0.12 & \cellcolor{green!15} 0.1 & \cellcolor{green!15} 0.08 \\\hline
synthetic4 & \cellcolor{red!15} 1.45 & \cellcolor{green!15} 0.13 & \cellcolor{green!15} 0.05 & \cellcolor{green!15} 0.05 & \cellcolor{green!15} 0.05 \\\hline
synthetic5 & \cellcolor{red!15} 1.45 & \cellcolor{yellow!25} 1.0 & \cellcolor{green!15} 0.24 & \cellcolor{green!15} 0.12 & \cellcolor{green!15} 0.13 \\\hline
synthetic6 & \cellcolor{red!15} 1.21 & \cellcolor{yellow!25} 0.95 & \cellcolor{yellow!25} 0.46 & \cellcolor{green!15} 0.27 & \cellcolor{green!15} 0.21 \\\hline

\end{tabu}

\end{table}

\subsection{Case Study: Aptamer Evolution}
\label{subsection:facs}

\begin{table}[t]

\begin{center}

  \caption{This table presents the top 5 motifs correlated  with increasing and decreasing binding affinity, as predicted by ordinary least-squares (unshuffled labels) and the LS estimator (shuffled labels). The motifs that are highlighted in green are common predictions of both techniques. Of the top 5 motifs predicted by OLS, 3 were also predicted by LS. All 5 of the bottom 5 motifs were also predicted LS. The Spearman rank-order correlation between the weights assigned to all 20 features by both methods is also included.} \label{table3}
  
  \vspace{3mm}

 \begin{tabu} to 0.5\textwidth { | X[c]  | X[c] | X[c] |}
 \hline
 \textbf{Rank} & $\hat{w}_{\mathrm{OLS}}$ & $\hat{w}_{\mathrm{LS}}$ \\ 
 \hline\hline
 
	1& \cellcolor{green!15}AT	& \cellcolor{green!15}AT	 \\
\hline
	2 & TGT	& GAT	 \\
\hline
3&	G&	AAT \\
\hline
4	& \cellcolor{green!15}ATC	& \cellcolor{green!15}ATC	 \\
\hline
5	& \cellcolor{green!15}CGT	& \cellcolor{green!15}CGT\\
\hline
\hline
\hline
16	& \cellcolor{green!15}GT	& \cellcolor{green!15} GT\\
\hline
17	& \cellcolor{green!15} TAT	& \cellcolor{green!15} AAT	 \\
\hline
 
18	& 	\cellcolor{green!15}AAT & \cellcolor{green!15}	GAT \\
\hline
19	& \cellcolor{green!15}GAT	&	\cellcolor{green!15}TAT \\
\hline
 20	& 	\cellcolor{green!15} CAT & \cellcolor{green!15} CAT	 \\
 
 \hline
 
\multicolumn{3}{|c|}{Rank correlation of all 20 features: \textbf{+0.893}} \\

\hline
 
\end{tabu}
\end{center}
\end{table}

Here, we consider a potential, real-world application of shuffled regression: \textit{aptamer evolution}. Aptamers are short DNA sequences that bind to target molecules. Aptamer evolution refers to the process of using chemical and computational methods to design and select aptamers with an affinity to a particular target \cite{knight2009array}. Computational techniques generally begin by measuring the affinity of a large number of DNA sequences, and then identifying \textit{motifs}, nucleotide bases or sequences of bases (such as ``ATC'' or ``GG''), that are common to many sequences which bind to the desired target molecule. Once these motifs are identified, new aptamers can be synthesized that combine various motifs or prominently feature the relevant motifs.

A variety of techniques can be used to make the initial measurement of affinities. The public dataset that we are using consists of measurements made through microarray analysis \cite{knight2009array}. In this method, the affinity is measured for each aptamer individually, but there are other techniques, where the same affinity measurement can be made on a population level. For example, fluorescence-activated cell sorting (FACS) can be used to measure the affinity of thousands of aptamers at once \cite{shapiro2005practical}. FACS can be faster and cheaper than microarray analysis, but unlike microarrays, it does not provides affinities for individual sequences -- it instead returns a histogram of affinities across all particles. Thus, we can simulate a FACS measurement by shuffling the labels from the microarray analysis dataset.

Once the initial measurements have been made, the question of identifying which motifs are responsible for affecting affinity becomes an inference problem, which can be approached through regression. So, in this experiment, our goal was to determine if shuffled linear regression applied to shuffled labels (coming from a simulated FACS experiment) identified the same motifs as standard linear regression applied to unshuffled labels (coming from the microarray experiment). 

The dataset we used was a pre-processed version of the test dataset in \cite{knight2009array}. We first reduced the dimensionality of the dataset by choosing the 20 most significant motifs (as computed by an initial run of standard linear regression) as features. We also restricted the dataset to the top 1200 aptamers, to create a more homogeneous and linear dataset.Thus, we were left with a dataset of size $n=1200$ and $d=21$, including the bias dimension.

First, standard linear regression was applied to  to the unshuffled labels. Then, the dataset was randomly partitioned into 5 replications, representing 5 individual FACS experiments. Then, the LS estimator was applied to the shuffled labels. The top 5 weights (motifs predicted to increase affinity) and the bottom 5 weights (motifs predicted to decrease affinity) from both techniques were recorded and are compared in Table \ref{table3}.

\section{Discussion}
\label{section:discussion}

In this work, we have proposed a framework for preforming shuffled linear regression that is computationally tractable and approximately recovers the weights of a noisy linear system. This framework is based on several estimators, including analogs of the classical least-squares estimator and the method-of-moments estimators, which we denote as the LS and SM estimators, respectively. We investigated some statistical properties of each estimator; for example, we presented theoretical results that show that the SM estimator is consistent for $d=1$ while the LS estimator is not (see Theorems \ref{thm_LS_inconsistency} and \ref{thm_sm_consistency}).

For higher dimensions, we presented empirical results that identified the regimes (in terms of $n$, $d$, and $R$) in which each estimator excels (see Figures \ref{fig:regimes1} and \ref{fig:figure2-2}). We found that in for $d =1,2$, the SM estimator generally provided estimates with the lowest inference error. In higher dimensions, the hybrid P1 estimator, formed by projecting the input features to a lower dimension before applying the SM estimator, yielded better estimates than either the LS or SM estimators alone. A thorough theoretical analysis that supports these results remains open for future work, as does the question of whether these estimators are optimal among all order-invariant estimators (see Appendix \ref{appendix:order-invariant}). 

We then applied our algorithm to standard and synthetic datasets, of varying size and dimensionality. In most cases, the weights that we estimated by shuffled linear regression were similar to those inferred by classical linear regression (applied to unshuffled labels). This was especially true when the dimensionality of the dataset was low or there were many replications of the experiment. In almost every case, as the number of replications increased to $R=8$, the relative error between the weights dropped to 20\% or even less. In most datasets, the relative error dropped significantly even with just 2 or 4 replications.

There were a few exceptions to this trend, such as the \texttt{power\char`_plant} dataset, where the relative error remained roughly constant even as the number of replications increased. Analysis of the residuals suggested that in this dataset, the data did not generate from a linear model. Thus, shuffled linear regression likely \textit{overfit}, because it found a permutation of the labels that produced a better linear fit than the ``true'' weights. 

To simulate replications, we divided our dataset into clusters randomly; in other words, our replications were simulating independent and \textit{identical} experiments. It is likely that further improvements in accuracy can result if the experiment is designed so that each replication consists of intentionally different input data (e.g. different replications may have slightly different means, variances, or ranges).

In general, we found that partial ordering information significantly reduces the inference error in shuffled regression. Comparing these results to that of a baseline estimate (see Appendix \ref{appendix:negative-control}) shows that increasing the number of replications did not completely disambiguate the ordering information; instead, it likely reduced the variance of the estimators, in turn reducing the inference error.

Furthermore, our case study with aptamer evolution provided evidence that even when the dimensionality is high, the estimators that we have described may be useful to determine \textit{qualitative} significance of different input features. From a dataset of 1200 aptamers and 20 motifs, our approach recovered 3 of the top 5 motifs correlated with increasing binding affinity, and all 5 of the top 5 motifs correlated with decreasing binding affinity.

Throughout our work, we have assumed that the data is generated from a noisy linear model. This raises several questions: from an information-theoretic perspective, does this assumption provide similar information to that would otherwise be provided by the mutual ordering of the input features and labels in the classical setting of linear regression? Does this assumption restrict our ability to generalize shuffled regression to non-linear settings, such as kernel regression? Can we use the recovered weights as a means to estimate the unknown permutation matrix, as an alternative to techniques presented in \cite{pananjady2016linear}? We hope to explore these questions in future work.

We have also released our code (see end of Section \ref{section:intro}) so that other researchers can investigate the effectiveness of shuffled linear regression on their own datasets.

\section{Conclusion}
\label{section:conclusion}

We have proposed a mathematical and algorithmic framework for shuffled linear regression, a technique to perform inference on datasets where the ordering of the labels is randomly permuted with respect to the input features. Our experiments on synthetic and standard datasets provide empirical evidence to show that shuffled linear regression estimates the weights of a linear model quantitatively and qualitatively. These results raise many important questions, while demonstrating that shuffled linear regression can be of practical interest in certain domains and datasets, which were formerly not open to analysis.

\bibliographystyle{unsrt}
\bibliography{references}

\begin{appendices}

\setcounter{thm}{0}

\section{Proof of Major Theorems} \label{appendix:proofs}

\subsection{Proof of Theorem 1}
\label{appendix:proof1}

\begin{thm}
\label{app_thm_LS_inconsistency}
Let $\boldsymbol{x} \in  \mathbb{R}^{n \times 1}$ be sampled from a Gaussian random variable with mean $\mu_X$ and variance $\sigma_X^2$. Let $y = \boldsymbol{\pi_0 x} w_0 + e$ be the product of $\boldsymbol{x}$ with an unknown scalar weight $w_0$ and an unknown $n \times n$ permutation matrix $\boldsymbol{\pi_0}$ added to zero-mean Gaussian noise with variance $\sigma_E^2$. The least-squares estimate of $w_0$, as defined in (\ref{eqn_LS_shuffled}), is inconsistent. In fact, the estimator converges to the following limit with probability 1:
\begin{equation*}
\lim_{n \to \infty} \hat{w}_{\mathrm{LS}}  = w_0 \left( \frac{\mu_X^2 + \sigma_X \sqrt{\sigma_X^2 + \frac{\sigma_{E}^2}{w_0^2}}}{\mu_X^2 + \sigma_X^2} \right) \end{equation*}
\end{thm}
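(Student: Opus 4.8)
The plan is to first collapse the joint minimization over weights and permutations into a one-dimensional ratio of order-statistic sums, and then compute the almost-sure limit of that ratio using the theory of order statistics. Assume without loss of generality that $w_0 > 0$ (the case $w_0 < 0$ is symmetric, with $\boldsymbol{x}$ and $y$ co-sorted in opposite orders). First I would invoke the sorted reformulation of the LS objective from (\ref{eqn_LS_simplified}): for a fixed $w$, the rearrangement inequality implies that the inner minimization over $\boldsymbol{\pi}$ is solved by the permutation that sorts $\boldsymbol{x}w$ and $y$ in the same order, so $\min_{\boldsymbol{\pi}} |\boldsymbol{\pi}\boldsymbol{x}w - y|^2 = |(\boldsymbol{x}w)^{\uparrow} - y^{\uparrow}|^2$. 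Restricting to the branch $w > 0$, where $(\boldsymbol{x}w)^{\uparrow} = w\,\boldsymbol{x}^{\uparrow}$, the objective is quadratic in $w$ and is minimized at
\[
\hat{w}_{\mathrm{LS}} = \frac{\langle \boldsymbol{x}^{\uparrow}, y^{\uparrow}\rangle}{|\boldsymbol{x}^{\uparrow}|^2} = \frac{\frac{1}{n}\sum_i x_i^{\uparrow} y_i^{\uparrow}}{\frac{1}{n}\sum_i (x_i^{\uparrow})^2}.
\]
I would also verify that the competing $w < 0$ branch yields a strictly larger asymptotic loss, so that this positive solution is indeed the global minimizer.

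The denominator is straightforward: sorting is a permutation, so $\sum_i (x_i^{\uparrow})^2 = \sum_i x_i^2$, and the strong law of large numbers gives $\frac{1}{n}\sum_i x_i^2 \to \E[X^2] = \mu_X^2 + \sigma_X^2$ with probability $1$. The numerator is where the real work lies. The key observation is that, since $\boldsymbol{\pi_0}$ merely permutes indices and the noise $e$ is i.i.d.\ and independent of $\boldsymbol{x}$, the multiset of entries of $y$ has the same law as $n$ i.i.d.\ draws of $Y = w_0 X + E$, which is Gaussian with mean $w_0\mu_X$ and variance $\sigma_Y^2 \equiv w_0^2\sigma_X^2 + \sigma_E^2$. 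Hence $y^{\uparrow}$ is exactly the vector of order statistics of this Gaussian, and the permutation $\boldsymbol{\pi_0}$ has no effect on the sorted values.

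The main obstacle is establishing the almost-sure limit of the \emph{sorted inner product} $\frac{1}{n}\sum_i x_i^{\uparrow} y_i^{\uparrow}$. This quantity is the comonotone coupling of the two empirical distributions, and I claim it converges to the integral of the product of the two population quantile functions,
\[
\frac{1}{n}\sum_i x_i^{\uparrow} y_i^{\uparrow} \;\longrightarrow\; \int_0^1 F_X^{-1}(u)\,F_Y^{-1}(u)\,du \quad \text{a.s.}
\]
To prove this I would write $x_i^{\uparrow} = F_{X,n}^{-1}(i/n)$ and $y_i^{\uparrow} = F_{Y,n}^{-1}(i/n)$ in terms of the empirical quantile functions, use the Glivenko--Cantelli theorem to obtain uniform convergence of the empirical quantile functions on compact subintervals of $(0,1)$, and control the Gaussian tails near the endpoints (uniform integrability of the extreme order statistics) to justify passing the limit through the integral. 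The dependence between $\boldsymbol{x}$ and $y$ is harmless here, because sorting each vector separately discards all joint information and retains only the two empirical marginals, which converge to the Gaussian marginals.

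Finally I would evaluate the Gaussian integral. Writing $F_X^{-1}(u) = \mu_X + \sigma_X\Phi^{-1}(u)$ and $F_Y^{-1}(u) = w_0\mu_X + \sigma_Y\Phi^{-1}(u)$, and using $\int_0^1 \Phi^{-1}(u)\,du = 0$ together with $\int_0^1 \bigl(\Phi^{-1}(u)\bigr)^2 du = 1$, the cross terms vanish and the integral equals $w_0\mu_X^2 + \sigma_X\sigma_Y$. Dividing by the denominator limit $\mu_X^2 + \sigma_X^2$, substituting $\sigma_Y = \sqrt{w_0^2\sigma_X^2 + \sigma_E^2}$, and factoring out $w_0$ yields
\[
\lim_{n\to\infty}\hat{w}_{\mathrm{LS}} = w_0\left(\frac{\mu_X^2 + \sigma_X\sqrt{\sigma_X^2 + \tfrac{\sigma_E^2}{w_0^2}}}{\mu_X^2 + \sigma_X^2}\right),
\]
which is the claimed limit. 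Since the numerator exceeds $w_0(\mu_X^2 + \sigma_X^2)$ whenever $\sigma_X, \sigma_E \neq 0$, this establishes the amplification bias and hence inconsistency.
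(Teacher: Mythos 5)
Your proposal is correct and follows the same skeleton as the paper's proof --- reduce the inner minimization over permutations to co-sorting via the rearrangement/exchange argument (the paper's Lemma 1), write $\hat{w}_{\mathrm{LS}}$ in the OLS closed form $\bigl(\tfrac{1}{n}\sum_i x_{(i)}^2\bigr)^{-1}\bigl(\tfrac{1}{n}\sum_i x_{(i)}y_{(i)}\bigr)$, handle the denominator by the strong law, and reduce everything to the almost-sure limit of the sorted inner product. Where you genuinely diverge is in how that central limit is established. The paper standardizes both vectors, reduces the claim to $\tfrac{1}{n}\sum_i z_{(i)}z'_{(i)}\to 1$ for two standard Gaussian samples, and proves it by combining a variance bound on Gaussian order statistics (Proposition 4.2 of Boucheron--Thomas) with Chebyshev's inequality and the algebraic identity $z_{(i)}z'_{(i)} = -\tfrac{1}{2}\bigl[(z_{(i)}-z'_{(i)})^2 - z_{(i)}^2 - z'^2_{(i)}\bigr]$, so that only $\tfrac{1}{n}\sum_i (z_{(i)}-z'_{(i)})^2\to 0$ needs to be shown. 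You instead identify $\tfrac{1}{n}\sum_i x_{(i)}y_{(i)}$ as the comonotone coupling of the two empirical marginals and argue convergence to $\int_0^1 F_X^{-1}(u)F_Y^{-1}(u)\,du$ via Glivenko--Cantelli plus uniform integrability of the extremes; this is essentially the statement that empirical measures of distributions with finite second moment converge almost surely in the quadratic Wasserstein distance, and for Gaussians the limiting integral evaluates to $\mu_X\mu_Y+\sigma_X\sigma_Y$ exactly as you compute. Your route is arguably more transparent and generalizes immediately beyond Gaussian designs (any $X$ and $Y$ with finite second moments), whereas the paper's concentration argument is more self-contained but leans on a Gaussian-specific order-statistic variance bound. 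Both arguments correctly exploit the fact that sorting discards the joint dependence between $\boldsymbol{x}$ and $y$ and retains only the marginals. The one step you flag but do not carry out --- ruling out the $w<0$ branch as the global minimizer when $w_0>0$ --- is also glossed over in the paper (it asserts an ``analogous argument''), so you are no less complete than the original on that point; to close it one would note that the anticomonotone pairing yields the asymptotic cross term $w_0\mu_X^2-\sigma_X\sigma_Y$, giving a strictly larger limiting loss whenever $\sigma_X\sigma_Y>0$.
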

\begin{proof}

$\,$
We establish this proof through two lemmas that we prove at the end of appendix \ref{appendix:proof1}. Firstly:
$\,$

\begin{lemma}
\label{lemma:simplerls}
In the case of $d=1$ and $w > 0$, the LS estimator can be written as follows:

$$ \hat{w}_{LS} = \arg \min_{w} \, \lvert y^{\uparrow} -  x^{\uparrow} \cdot w \rvert^2  $$

where we use the notation $v^{\uparrow}$ to denote an $n$-by-$1$ vector that consists of the $n$ entries of $v$ sorted to be in ascending order.
\end{lemma}
(We have assumed for simplicity $w > 0$, but if $w < 0$, an an analogous argument can be made to show that the least-square difference occurs when $x$ is sorted in \textit{descending} order, which leads to the same conclusion as below.) Using Lemma 1, we can treat $y^{\uparrow}$ and  $x^{\uparrow}$ as two new vectors to which we are applying ordinary least-squares (OLS) regression. Using well-known results from the OLS estimator, we can write a closed-form expression for the LS estimator as:

$$ \left( \frac{1}{n} x^{\uparrow T} x^{\uparrow} \right)^{-1} \left( \frac{1}{n} x^{\uparrow T} y^{\uparrow} \right) = \left( \frac{1}{n} \sum_{i}^{n} x_{(i)}^2 \right)^{-1} \left( \frac{1}{n} \sum_{i}^{n} x_{(i)} y_{(i)} \right)  $$
where we have adopted the notation from order statistics that $v_{(i)}$ refers to the $i^{th}$ smallest entry in the vector $v$.
We are interested in finding the limit of:
$$ \lim_{n \rightarrow \infty} \left( \frac{1}{n} \sum_{i}^{n} x_{(i)}^2 \right)^{-1} \left( \frac{1}{n} \sum_{i}^{n} x_{(i)} y_{(i)} \right)  $$
By Slutsky's Theorem \cite{arthurs.goldberger1964}, we can separate the two terms and find their limits separately.
Quite clearly, $$\lim_{n \rightarrow \infty} \left( \frac{1}{n} \sum_{i}^{n} x_{(i)}^2 \right)^{-1} = \lim_{n \rightarrow \infty} \left( \frac{1}{n} \sum_{i}^{n} x_{i}^2 \right)^{-1} = E[X^2]^{-1} = \frac{1}{\mu_X^2 + \sigma_X^2} $$

The second term requires more work. We will find its limit using Lemma 2.
$\,$

\begin{lemma}
If the entries in a vector $x$ are drawn from Gaussian-distributed random variable $X \sim N(\mu_X, \sigma_X^2)$ and the entries in a vector $y$ are drawn from Gaussian-distributed random variable $Y\sim N(\mu_Y, \sigma_Y^2)$, 

$$ \lim_{n \rightarrow \infty} \left( \frac{1}{n} \sum_{i}^{n} x_{(i)} y_{(i)} \right) \rightarrow \mu_X \mu_Y + \sigma_X \sigma_Y $$
\end{lemma}

$\,$
Because we have assumed that the entries of $x$ are i.i.d. drawn from a random normal distribution, we can treat each entry $x_{1}, x_{2} ...$ as the outcome of an i.i.d. Gaussian. $X_{1}, X_{2} ... \sim N(\mu_X, \sigma_X^2)$. Similarly, each $y_1, y_2, ...$ is the outcome of random variables $Y_1, Y_2, ... \sim N(w_0 \cdot \mu_X, (w_0 \cdot \sigma_X)^2 + \sigma_{E}^2)$. The distribution of $Y_{i}$ follows directly from the way the vector $y$ is generated -- by multiplying $x$ by a weight $w_0$ and adding a Gaussian noise with variance $\sigma_{E}^2$.
Thus, in our case,

$$ \lim_{n \rightarrow \infty} \left( \frac{1}{n} \sum_{i}^{n} x_{(i)} y_{(i)} \right) \rightarrow   w_0 \mu_X^2 + \sigma_X \sqrt{w_0^2 \sigma_X^2 + \sigma_{E}^2}$$
Using the continuous mapping function, we can combine these results to show that the LS estimator converges to the following limit, proving our theorem.
$$ \lim_{n \rightarrow \infty} \left( \frac{1}{n} \sum_{i}^{n} x_{(i)}^2 \right)^{-1} \left( \frac{1}{n} \sum_{i}^{n} x_{(i)} y_{(i)} \right) = \lim_{n \rightarrow \infty} \hat{w}_{LS} \rightarrow w_0 \left( \frac{\mu_X^2 + \sigma_X \sqrt{\sigma_X^2 + \frac{\sigma_{E}^2}{w_0^2}}}{\mu_X^2 + \sigma_X^2} \right)$$
$\,$

\end{proof}

\subsection*{Proof of Lemma 1}

\begin{proof}
We claim that the following objective functions are equivalent when $d=1$ and $w > 0$.
$$\arg \min_{w} \min_{\boldsymbol{\pi}} \, \lvert y - \boldsymbol{\pi} \cdot x \cdot w \rvert^2 $$
$$\arg \min_{w} \, \lvert y^{\uparrow} -   x^{\uparrow} \cdot w \rvert^2  $$
In other words, if we assume without loss of generality that the entries in $y$ are already in ascending order, we would like to prove that the ordering of $x$ that minimizes the squared difference between $y$ and $x \cdot w$ is when $x$ is also sorted in ascending order. 

First of all, note that since $w>0$, it does not affect the ordering of $x \cdot w$, so we can absorb it into $x$ and show only that $\lvert y^{\uparrow} - \boldsymbol{\pi} \cdot x \rvert^2$ takes a minimum when $x$ is also sorted in ascending order. 

Now, for the sake of contradiction, let us assume that $\lvert y^{\uparrow} - \boldsymbol{\pi} \cdot x \rvert^2$ is minimized for a permutation of $x$, call it $x'$, that does not have its entries in ascending order. Then, there must be $i, j$ such that $i < j$ but $x'_i > x'_j$ where we use the notation $v_i$ to refer to the $i^{th}$ element of $v$. 
But then:

$$\lvert y^{\uparrow} - x' \rvert^2 = $$
$$\sum_k^n (y^{\uparrow}_k - x'_k )^2 = $$
$$(y^{\uparrow}_{i} - x'_i)^2 + (y^{\uparrow}_{j} - x'_j)^2 +  \sum_{k \ne i,j}^n (y^{\uparrow}_{k } - x'_k )^2 = $$
$$(y^{\uparrow}_{i} - x'_i)^2 + (y^{\uparrow}_{j} - x'_j )^2 +  \sum_{k \ne i,j}^n (y^{\uparrow}_{k } - x'_k )^2 \ge $$
$$(y^{\uparrow}_{j} - x'_i)^2 + (y^{\uparrow}_{i} - x'_j )^2 +  \sum_{k \ne i,j}^n (y^{\uparrow}_{k } - x'_k )^2$$
But this is a contradiction, because we have showed that we can achieve a smaller LS by switching the $i^{th}$ and $j^{th}$ entry in $x'$.
\textit{Note}: The inequality in the last step follows from the fact that the difference between the final expression and the one before it can be written as:

$$ -2(x'_i y^{\uparrow}_i + x'_j y^{\uparrow}_j) + 2(x'_i y^{\uparrow}_j + x'_j y^{\uparrow}_i) = $$

$$ 2 (y^{\uparrow}_j - y^{\uparrow}_i) (x'_i - x'_j)  \ge 0 $$
The last equality is greater than or equal to zero because  $(y^{\uparrow}_j - y^{\uparrow}_i) \ge 0$ because $y^{\uparrow}$ is sorted in ascending order and $(x'_i - x'_j) \ge 0$ by assumption.
$\,$
\end{proof}

\subsection*{Proof of Lemma 2}

We claim that if the entries in a vector $x$ are drawn from Gaussian-distributed random variable $X \sim N(\mu_X, \sigma_X^2)$ and the entries in a vector $y$ are drawn from Gaussian-distributed random variable $Y\sim N(\mu_Y, \sigma_Y^2)$, then with probability 1,

$$ \lim_{n \rightarrow \infty} \left( \frac{1}{n} \sum_{i}^{n} x_{(i)} y_{(i)} \right) \rightarrow \mu_X \mu_Y + \sigma_X \sigma_Y $$
To prove this, we start by defining $Z \sim N(0,1)$ to be the standard Gaussian random variable. Let us consider the probability distribution of the order statistics of a vector whose entries are sampled from $Z$.
If $z$ is an $n$-dimensional vector whose entries are sampled from $Z$, then, we write the marginal distributions of $z_{(1)}, z_{(2)} ...  z_{(n)}$ as $Z_{(1)}, Z_{(2)}, ... Z_{(n)}$
$\,$
\begin{prop}
In the limit of large $n$, the variance of each $Z_{(k)}$ approaches 0. 
\end{prop}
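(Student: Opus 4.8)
The plan is to reduce the Gaussian order statistics to uniform order statistics through the probability integral transform. Since $Z \sim N(0,1)$ has a continuous, strictly increasing CDF $\Phi$, the variable $U \equiv \Phi(Z)$ is uniform on $[0,1]$, and because $\Phi$ is monotone this transformation preserves the ordering of any sample. Hence if $U_{(1)} \le \cdots \le U_{(n)}$ are the order statistics of a uniform sample, we may write $Z_{(k)} = \Phi^{-1}(U_{(k)})$. It therefore suffices to show that $U_{(k)}$ concentrates and that this concentration survives passage through $\Phi^{-1}$.

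First I would use the classical fact that $U_{(k)}$ has a $\mathrm{Beta}(k, n-k+1)$ distribution, so that
$$ \E[U_{(k)}] = \frac{k}{n+1}, \qquad \Var(U_{(k)}) = \frac{k(n-k+1)}{(n+1)^2(n+2)} \le \frac{1}{4(n+2)}. $$
Thus every uniform order statistic concentrates about the quantile $p \equiv k/(n+1)$, with a variance that vanishes uniformly in $k$ at rate $O(1/n)$. Next I would transfer this concentration to $Z_{(k)} = \Phi^{-1}(U_{(k)})$. For indices $k$ whose quantile $p$ remains in a fixed compact subinterval of $(0,1)$, the map $\Phi^{-1}$ is Lipschitz on a neighbourhood of $p$ — its derivative is $1/\phi(\Phi^{-1}(p))$, which is bounded there — so a delta-method (Taylor) estimate gives $\Var(Z_{(k)}) = O\!\left(\Var(U_{(k)})\right) \to 0$.

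The hard part will be the extreme and near-extreme order statistics, where $p \to 0$ or $p \to 1$ and the derivative of $\Phi^{-1}$ is unbounded, so the simple Lipschitz bound breaks down. To handle the extremes I would invoke the classical extreme-value result that the normalized maximum of $n$ standard Gaussians converges in distribution to a Gumbel law; since the scaling constant is $a_n = 1/\sqrt{2\log n}$ and the limiting law has finite variance, this yields $\Var(Z_{(n)}) = O(1/\log n) \to 0$, and the symmetry of $Z$ gives the same conclusion for the minimum. For the intermediate regime (where $k/n \to 0$ or $1$ but $k$ is not at the very edge) I would control the tails of $\Phi^{-1}(U_{(k)})$ by combining moderate-deviation bounds on the Beta concentration with the Gaussian tail asymptotic $\Phi^{-1}(1-\epsilon) \sim \sqrt{2\log(1/\epsilon)}$, so that the slowly growing magnitude of $\Phi^{-1}$ is dominated by the fast $O(1/n)$ decay of the fluctuations of $U_{(k)}$. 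This edge analysis is where the genuine care is required; the bulk is routine once the Beta variance bound is in hand.
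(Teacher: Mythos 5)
Your route is genuinely different from the paper's. The paper does not touch uniform order statistics at all: it quotes a single ready-made variance inequality for Gaussian order statistics (Proposition 4.2 of Boucheron and Thomas), namely $\Var[Z_{(k)}] \le C\big(k\log\tfrac{2n}{k} - k\log(1+\tfrac{4}{k}\log\log\tfrac{2n}{k})\big)^{-1}$ for $1\le k\le n/2$ (with the symmetric statement for the upper half), and then spends the entire proof showing that this right-hand side is at most $C/\log\log(2n)$ uniformly in $k$. That citation is essentially a packaged form of the three-regime analysis you propose to do by hand: your delta-method heuristic $\Var(Z_{(k)})\approx \big((\Phi^{-1})'(k/n)\big)^2\Var(U_{(k)}) \asymp 1/(k\log(n/k))$ reproduces the shape of their bound, and if carried out it would even yield the better uniform rate $O(1/\log n)$ rather than the paper's $O(1/\log\log n)$. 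So your plan is more self-contained and more informative in the bulk, at the cost of having to do by hand the edge analysis that the citation hides.

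Two places need repair before this is a proof. First, the extreme case: convergence in distribution of $(Z_{(n)}-b_n)/a_n$ to a Gumbel law does not imply anything about $\Var(Z_{(n)})$ --- variances are not continuous under weak convergence --- so ``the limiting law has finite variance, hence $\Var(Z_{(n)})=O(a_n^2)$'' is a non sequitur as written. You need moment convergence for Gaussian extremes (a classical but separate fact, e.g.\ Pickands' moment-convergence theorem) or a direct superconcentration bound on the maximum; note that the generic Gaussian concentration bound (Borell--TIS) only gives $\Var(Z_{(n)})\le 1$, which is not enough, so something specific to Gaussian extremes is genuinely required here. Second, the Lipschitz/delta-method step in the bulk is not literally valid, because $\Phi^{-1}$ is not Lipschitz on all of $(0,1)$ and $U_{(k)}$ has full support: you must split on the event that $U_{(k)}$ leaves the neighbourhood of $k/(n+1)$ and control $\E[\Phi^{-1}(U_{(k)})^2]$ on that event, using the exponential (not merely second-moment) concentration of the Beta law together with the tail asymptotic $|\Phi^{-1}(\epsilon)|\sim\sqrt{2\log(1/\epsilon)}$. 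You have correctly identified the intermediate regime as the delicate part; it is exactly the part for which the paper outsources the work to the cited inequality, and your outline there is still a plan rather than an argument.
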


$\,$

\begin{proof}
It is known that the variance of each $Z_{(k)}$ is bounded. For example, from Proposition 4.2 in \cite{Boucheron2012}, 

$$ Var[Z_{(k)}] \le \frac{C}{k \log{\frac{2n}{k}}  - k \log{(1+\frac{4}{k} \log{\log{\frac{2n}{k}}})}}$$

for a constant $C$, where $1 \le k \le \frac{n}{2}$. By symmetry, it is clear that the variance can be written similarly for $\frac{n}{2} \le k \le n$.

Now, observe that the right-hand side of the inequality itself is bounded for large enough $n$ as follows:

\begin{equation*}
\begin{split}
 \frac{C}{k \log{\frac{2n}{k}}  - k \log{(1+\frac{4}{k} \log{\log{\frac{2n}{k}}})}} \le \\ \frac{C}{k \log{\frac{2n}{k}} - 4 \log{\log{\frac{2n}{k}}}} \le \\ \frac{C}{\log{2n} - 4 \log{\log{2n}}} \le \\ \frac{C}{\log{\log{2n}}}
 \end{split}
 \end{equation*}
 
This is because in each inequality, we have decreased the magnitude of the denominator, which increases the value of the fraction, as long as the denominator remains greater than zero, which we argue below is the case for large enough $n$.

\begin{itemize}
\item \textbf{Inequality 1}: This follows because $x \ge \log{(1+x)} \, \forall \, x \ge 0$ and $\frac{4}{k} \log{\log{\frac{2n}{k}}} \ge 0$, since the smallest value it can take is when $k=\frac{n}{2}$, when it evaluates to $\frac{8}{n} \log \log 4 > 0$.

We also need to ensure that the new denominator remains greater than zero. The new denominator is $k \log{\frac{2n}{k}} - 4\log{\log{\frac{2n}{k}}}$. By taking the derivative of the first term, it is easy to show that it is minimized for $k=1$ and the second term is maximized for $k=1$, and thus difference is largest when $k=1$. Substituting, we get that the difference is $\log{2n} - 4 \log{\log{2n}}$, which can straightforwardly be shown to be greater than zero for large enough $n$.

\item \textbf{Inequality 2}: This inequality follows from the preceding analysis, namely that the value of the denominator is largest when $k=1$. We have also already confirmed that the resulting denominator is positive for large enough $n$.

\item \textbf{Inequality 3}: This inequality follows from the fact that for large enough $n$,

$$\log{2n} - 5 \log{\log{2n}} > 0 \implies \log{2n} - 4 \log{\log{2n}} > \log{\log{2n}}$$

It is clear that this inequality is true for large enough $n$ and it is clear that for $n$ this large or larger, the final denominator is positive.

\end{itemize}

Similar reasoning can be applied in the case that $\frac{n}{2} \le k \le n$. Thus, this completes the proof of Proposition 1. 

\vspace{5mm}

By immediate application of Chebyshev's inequality, this means that $\forall \delta > 0$,

$$\lim_{n \rightarrow \infty} \Pr{ \lvert z_{(i)}-E[Z_{(i)}] \rvert} > \delta = 0$$
Furthermore, if $z'$ is another vector whose elements are sampled from the random distribution, $Z$ then by the triangle inequalty, we have:
$$\lim_{n \rightarrow \infty} \Pr{ \lvert z_{(i)}-z'_{(i)} \rvert} > \delta = 0$$
Now, let us return to the limit we are finding. Clearly, 

$$X \sim (\sigma_X Z + \mu_X) \,\, , \,\, Y \sim  (\sigma_Y Z + \mu_Y) $$
Because neither multiplication by a positive constant nor adding a scalar affects the ordering of elements in a vector, we can rewrite the limit as:
$$\lim_{n \rightarrow \infty} \left( \frac{1}{n} \sum_{i}^{n} x_{(i)} y_{(i)} \right) =  \lim_{n \rightarrow \infty} \left( \frac{1}{n} \sum_{i}^{n} (\sigma_X z_{(i)} + \mu_X) (\sigma_Y z'_{(i)}  + \mu_Y) \right)$$ 

$$= \lim_{n \rightarrow \infty} \left( \frac{1}{n} \sum_{i}^{n} (\sigma_X \sigma_Y z_{(i)} z'_{(i)} + \mu_Y \sigma_X z_{(i)} + \mu_X \sigma_Y z'_{(i)}  + \mu_X \mu_Y ) \right) $$

$$= \mu_X \mu_Y + (\mu_Y \sigma_X + \mu_X \sigma_Y) E[Z] + \sigma_X \sigma_Y \lim_{n \rightarrow \infty} \left( \frac{1}{n} \sum_{i}^{n}  z_{(i)} z'_{(i)}  \right) $$

$$= \mu_X \mu_Y  + \sigma_X \sigma_Y \lim_{n \rightarrow \infty} \left( \frac{1}{n} \sum_{i}^{n}  z_{(i)} z'_{(i)}  \right)$$
If the limit in the last expression equals $1$, our original limit converges to the desired value. Thus, we need to show that $\lim_{n \rightarrow \infty} \left( \frac{1}{n} \sum_{i}^{n}  z_{(i)} z'_{(i)}  \right) \rightarrow 1$
First, note that:
$$\lim_{n \rightarrow \infty} \left( \frac{1}{n} \sum_{i}^{n}  z_{(i)} z'_{(i)}  \right) = $$ 

$$\lim_{n \rightarrow \infty} \left( \frac{- 1}{2n} \sum_{i}^{n} \left( [z_{(i)} - z'_{(i)}]^2 - z_{(i)}^2 - z'^2_{(i)} \right) \right) = $$
$$\lim_{n \rightarrow \infty} \left( \frac{- 1}{2n} \sum_{i}^{n} \left( [z_{(i)} - z'_{(i)}]^2 - 2 \right) \right) = $$
$$1 - \lim_{n \rightarrow \infty} \left( \frac{1}{2n} \sum_{i}^{n} \left( [z_{(i)} - z'_{(i)}]^2  \right) \right) = 1 $$

The last equality follows because, we have already showed above that $\lim_{n \rightarrow \infty} \left( \frac{1}{2n} \sum_{i}^{n} \left( [z_{(i)} - z'_{(i)}]^2  \right) \right) \rightarrow 0$ with probability 1.

\end{proof}
Thus, we conclude our proof of Lemma 2.

\subsection{Proof of Theorem 2}
\label{appendix:proofthm2}

\begin{thm}
\label{app_thm_sm_consistency}
Let $\boldsymbol{x} \in  \mathbb{R}^{n \times 1}$ be sampled from a Gaussian random variable with mean $\mu_X$ and variance $\sigma_X^2$. Let $y = \boldsymbol{\pi_0 x} w_0 + e$ be the product of $\boldsymbol{x}$ with an unknown scalar weight $w_0$ and an unknown $n \times n$ permutation matrix $\boldsymbol{\pi_0}$ added to zero-mean Gaussian noise with finite variance $\sigma_E^2$. The SM estimator, as defined in (\ref{eqn_sm_d1}), is consistent when $\mu_X \ne 0$. In other words, with probability 1,
\begin{equation*}
\lim_{n \to \infty} \hat{w}_{\mathrm{SM}}  = w_0. \end{equation*}
Furthermore, given a fixed sample vector $x$, the SM estimator is unbiased as long as as $\bar{x} \equiv \frac{1}{n}\sum_i x_i \ne 0$, meaning that:
\begin{equation*}
 \E[\hat{w}_{\mathrm{SM}} - w_o] = 0,
\end{equation*}
\end{thm}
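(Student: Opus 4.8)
The plan is to exploit the single structural fact that makes this estimator tractable: the first moment (equivalently, the sum of entries) is invariant under permutation. Since $\boldsymbol{\pi_0}$ merely reorders the entries of $\boldsymbol{x}$, we have $\sum_i (\boldsymbol{\pi_0 x})_i = \sum_i x_i$, and therefore
\begin{equation*}
\sum_i^n y_i = w_0 \sum_i^n (\boldsymbol{\pi_0 x})_i + \sum_i^n e_i = w_0 \sum_i^n x_i + \sum_i^n e_i,
\end{equation*}
regardless of which permutation was applied. Substituting this into the closed form of the estimator in (\ref{eqn_sm_d1}) gives the clean decomposition
\begin{equation*}
\hat{w}_{\mathrm{SM}} = \frac{\sum_i^n y_i}{\sum_i^n x_i} = w_0 + \frac{\sum_i^n e_i}{\sum_i^n x_i},
\end{equation*}
which isolates all of the estimation error into a single ratio of partial sums. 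Both halves of the theorem then reduce to controlling this ratio.

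For the consistency claim I would divide numerator and denominator by $n$ and apply the strong law of large numbers to each separately: $\frac{1}{n}\sum_i^n e_i \to 0$ almost surely (the noise is zero-mean with finite variance) and $\frac{1}{n}\sum_i^n x_i \to \mu_X$ almost surely. Invoking Slutsky's theorem (or the continuous mapping theorem applied to division, valid because the limit of the denominator is nonzero) yields $\frac{\sum_i^n e_i}{\sum_i^n x_i} \to 0$ with probability $1$, so $\hat{w}_{\mathrm{SM}} \to w_0$. The hypothesis $\mu_X \ne 0$ enters exactly here: it guarantees the denominator does not degenerate in the limit, so the ratio of the two limits is well defined.

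For the unbiasedness claim I would instead condition on the fixed sample vector $x$, so that $\sum_i^n x_i = n\bar{x}$ is a nonzero constant and only $e$ (and, immaterially, $\boldsymbol{\pi_0}$) remains random. Taking expectations in the decomposition above and using linearity together with $\E[e_i] = 0$ gives $\E\!\left[\frac{\sum_i^n e_i}{\sum_i^n x_i}\right] = \frac{1}{n\bar{x}}\sum_i^n \E[e_i] = 0$, hence $\E[\hat{w}_{\mathrm{SM}} - w_0] = 0$. The condition $\bar{x}\ne 0$ is what makes the estimator well defined for the fixed sample.

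Honestly, the only genuine content is the opening observation about permutation invariance of the first moment; once that identity is in hand, everything else is a routine application of the SLLN and linearity of expectation, and there is no serious obstacle. If anything, the step most worth stating carefully is the passage from the almost-sure limits of numerator and denominator to the almost-sure limit of their ratio, since this requires the denominator's limit to be nonzero — precisely the role played by the assumption $\mu_X \ne 0$. It is also worth emphasizing explicitly that the permutation $\boldsymbol{\pi_0}$ drops out entirely and plays no role in either argument, which is the feature that distinguishes the self-moment estimator from the least-squares estimator of Theorem~\ref{thm_LS_inconsistency}.
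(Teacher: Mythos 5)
Your proposal is correct, and the opening decomposition $\hat{w}_{\mathrm{SM}} = w_0 + \bigl(\sum_i e_i\bigr)/\bigl(\sum_i x_i\bigr)$ together with the unbiasedness argument (condition on $x$, apply linearity and $\E[e_i]=0$) is exactly what the paper does. Where you genuinely diverge is the consistency step. The paper proceeds by computing the mean-squared error, factoring $\E\bigl[(\hat{w}_{\mathrm{SM}}-w_0)^2\bigr] = \frac{\sigma_E^2}{n}\cdot\E\bigl[1/\bar{x}^2\bigr]$ using independence of $e$ and $x$, and arguing that this vanishes as $n\to\infty$ because $\bar{x}$ concentrates around $\mu_X \ne 0$. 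You instead apply the strong law of large numbers to numerator and denominator separately and pass to the limit of the ratio. Your route is arguably the better one for two reasons: first, it directly delivers the almost-sure convergence that the theorem actually asserts, whereas an $L^2$ argument only gives convergence in probability without an extra step; second, it sidesteps a technical soft spot in the paper's calculation, namely that $\E\bigl[1/\bar{x}^2\bigr]$ is infinite for any finite $n$ when $\bar{x}$ is Gaussian (the reciprocal of a squared Gaussian is not integrable near zero), so the paper's intermediate expression needs to be read as a heuristic limit statement rather than a literal finite-$n$ identity. The one thing the paper's approach buys in exchange is an explicit (asymptotic) rate, $\sigma_E^2/(\mu_X^2 n)$, for the estimator's error, which your SLLN argument does not provide. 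Your closing observation that the permutation drops out of the first moment entirely is also the right thing to emphasize, and is implicit (though not highlighted) in the paper's first line.
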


\begin{proof}
For $d=1$, The  $\hat{w}_{SM}$ estimator takes the form:

$$\frac{\sum_i{x_i w_0 + e_i}}{\sum_i{x_i}} = \frac{w_0\sum_i{x_i  + \sum_i{e_i}}}{\sum_i{x_i}} = w_0 + \frac{\frac{1}{n}\sum_i{e_i}}{\bar{x}} $$

Since the noise is zero-mean, $\E[\sum_i{e_i}]=0$, so this estimator is unbiased.  The mean-squared error in the estimator is:

$$\E[(\hat{w}_{\mathrm{SM}} - w_o)^2] = \E\left[ \frac{( \sum_i{e_i})^2}{(\sum_i{x_i})^2} \right] = \E \left[\left( \frac{1}{n}\sum_i{e_i} \right)^2 \right] \cdot \E\left[ \frac{1}{(\frac{1}{n} \sum_i{x_i})^2} \right] = \frac{\sigma_E^2}{n} \cdot \E\left[ \frac{1}{(\frac{1}{n} \sum_i{x_i})^2} \right]$$

In the limit of large $n$, $\frac{1}{n} \sum_i{x_i} $ approaches a Gaussian random variable with mean $\mu_X$ and infinitesimal variance. Thus, the error becomes:

$$\lim_{n \to \infty} \E[(\hat{w}_{\mathrm{SM}} - w_o)^2] = \frac{\sigma_E^2}{\mu_X^2 n} =0 ,$$

under the assumption $\mu_X \ne 0$, proving consistency.

\end{proof}

\section{An analytical solution for $w_{SM}$ when $d=2$}
\label{appendix:analyticalsm2}
In this section, we derive an analytical solution for the SM estimator for $d=2$. To simplify notation, we will use the notation $\E[y]$, $\E[y^2], ...$ to refer to sample moments of a vector $y$, and $E[xy]$ to refer to the sample cross-moments of vectors $x$ and $y$. We can write equations for the first two self-moments:
\begin{equation*}
\begin{split}
 \E[\boldsymbol{x} \cdot w] = \E[y] \implies
 \E[\boldsymbol{x}] \cdot w = \E[y] \\
 \E[(\boldsymbol{x} \cdot w)^2] = E[y^2] \implies 
 w^T \boldsymbol{x}^T  \boldsymbol{x} w = y^T  y
 \end{split}
 \end{equation*}

The latter is a quadratic equation in $w=[w_1, w_2]$. Substitution for $w_1$ using the former gives us the following equation for $w_2$:

$$ a w_2^2 + b w_2 + c = 0$$

where:

$$a = \left(\frac{E[\boldsymbol{x}_2]}{E[\boldsymbol{x}_1]} \right)^2 E[\boldsymbol{x}_1^2] - 2 \left(\frac{E[\boldsymbol{x}_2]}{E[\boldsymbol{x}_1]}\right) E[\boldsymbol{x}_1 \boldsymbol{x}_2] + E[\boldsymbol{x}_2]^2$$ 

$$b = E[\boldsymbol{x}_1 \boldsymbol{x}_2]\left(\frac{E[y]}{E[\boldsymbol{x}_{1}]}\right) - 2\frac{E[y]}{E[\boldsymbol{x}_{1}]} \frac{E[\boldsymbol{x_2}]}{E[\boldsymbol{x}_{1}]}E[\boldsymbol{x}_1^2]$$

$$c =  \left(\frac{E[y]}{E[\boldsymbol{x}_{1}]} \right)^2 E[ \boldsymbol{x}_1^2] -  E[y^2]$$

Here, we use $\boldsymbol{x}_1$ and $\boldsymbol{x}_2$ to refer to the first and second columns of $\boldsymbol{x}$. Using the quadratic formula, we can solve for $w_2$ (getting, in general, two solutions), and then, we can solve for $w_1$:

$$ w_1  = \frac{E[y]}{E[\boldsymbol{x}_{1}]}  - w_2 \frac{E[\boldsymbol{x}_{2}]}{E[\boldsymbol{x}_{1}]} $$

\section{Procedure for Regimes of Efficient Operation}
\label{appendix:regimes}

In this section, we provide more details on how we computed and displayed the regimes of efficient operation for the SM and P1 estimators. In Figure \ref{fig:regimes1}, we conducted 5 simulations for each value of $n$ and $d$, for each of the two estimators, for a total of 600 simulations. In Figure \ref{fig:figure2-2}, we conducted 10 simulations for each value of $R$ and $d$ for a total of 480 simulations. The values of the $d$-dimensional weight vector, $w_0$, were chosen independently and identically from a  Gaussian random variable with mean 0 and variance 1. In each simulation, we chose the SNR level to be 15 dB. The SNR was defined as the average power of the signal, $\boldsymbol{x} \cdot w_0$, divided by the average power of the noise vector, $e$.

We then displayed the estimator that had the best mean relative error across the 5 trials. In some cases, the difference between the SM and P1 estimators was within 2 percentage points; in these cases, the estimator with smaller running time was selected. For example, this meant that the P1 estimator, which reduces to the SM estimator $d=1$ after some computational overhead, was never the optimal estimator in the 1-dimensional setting. We also indicated the range of errors by placing a black dot in each cell corresponding to the mean of the inference error for that choice of parameters. The smallest dot corresponded to an inference error of $<5\%$, the medium-sized dot corresponded to an inference error of between $5$ and $30 \%$, and the largest dot corresponded to an error of more than $30 \%$.

We did not include the LS estimator in these figures because, in general, it produced very similar results to the P1 estimator. Thus,  when the mean or standard deviation were compared, there was negligible difference between the two. But when the \textit{worst-case} performance across the 10 trials of the LS estimator was compared to that of the P1 estimator, the P1 generally produced better results. See Figure \ref{fig:alternate-regime1}.

The P2 estimator was also tested for some values of $n$ and $d$, and provided similar accuracies to the P1 and LS estimator; however, it required significantly more running time, and so was not included in the full analysis. 

\newgeometry{left=4cm,right=4cm,top=0.1cm,bottom=0.1cm}

\begin{figure}
\begin{centering}
\includegraphics[width=0.75\columnwidth]{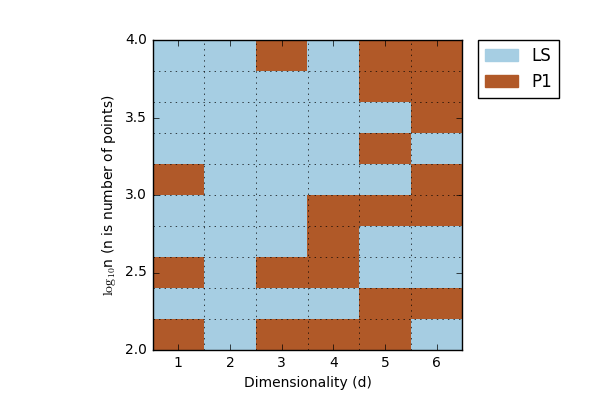}

\medskip
\includegraphics[width=0.75\columnwidth]{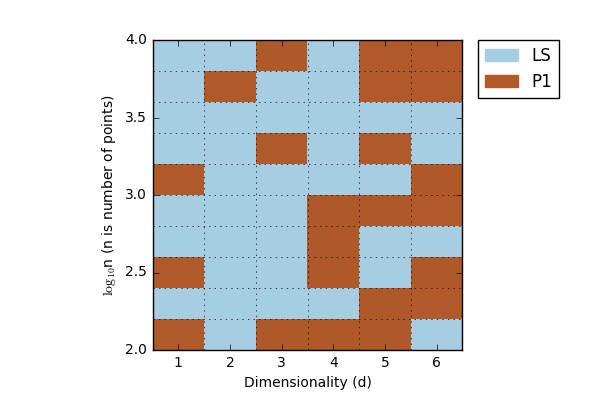}

\medskip
\includegraphics[width=0.75\columnwidth]{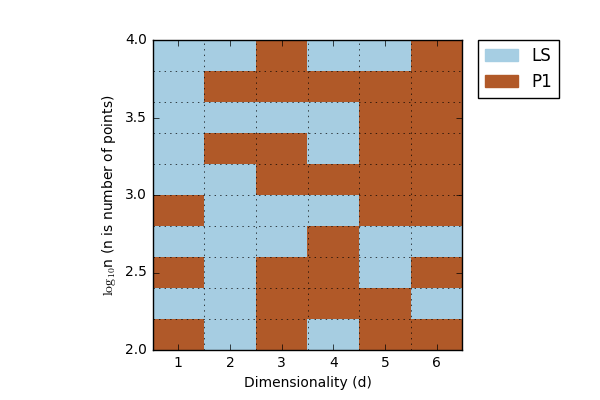}

\caption{Here, we show the results of using various metrics to compare the LS and P1 estimators for each $(n,d)$. There are no clear trends in the mean (top) and mean + one standard deviation (center), particularly for $d>2$ (as the SM estimator performed best for $d \le 2$). When the worst-case relative error across 10 trials was compared (bottom), the P1 estimator did tend to perform better.} \label{fig:alternate-regime1}
\end{centering}
\end{figure}

\restoregeometry

\section{Order-Invariant Estimators}
\label{appendix:order-invariant}

The LS and SM estimators discussed in the main text can be considered special cases of a general class of estimators that we refer to as \textit{order-invariant} estimators. These are estimators that take the following form:

\begin{equation*}
\begin{split}
&L(\boldsymbol{x},y,w) =  \lvert g((\boldsymbol{x} \cdot w)^{\uparrow}) - g(y^{\uparrow}) \rvert_p \\
&\hat{w} = \arg \min_{w} \, L(\boldsymbol{x},y,w)
\end{split}
\end{equation*}

The LS estimator is the specific case when $g(v) = v$, and $p=2$. The SM estimator has $g(v) =  \sum_k^K f(k) \frac{1}{n} \sum_i^n {v_i^{\,k}} $ and $p=2$.

One way to think about these estimators is that they minimize a distance between the \textit{sorted} vectors $(\boldsymbol{x} \cdot w)^{\uparrow}$ and $(y)^{\uparrow}$. Alternatively, they can be thought of as minimizing a distance metric between the two unsorted vectors, but where that distance metric is invariant to the ordering of each of the vector. This resembles the setting of histogram similarity \cite{cha2002measuring}. There are number of more complex metrics for histogram similiarty, such as the earth mover's distance (EMD) \cite{dobrushin1970definition} and the Kolmogorov-Smirnov (KS) statistic \cite{wilcox2005kolmogorov}. These can be converted to estimators by choosing $g(v)$ to be the empirical cumulative probability distribution (ECDF) of the samples in a vector $v$. Thus, we may define:

$$\hat{w}_{\text{EMD}} = \arg \min_{w} \, \lvert \text{ECDF}((\boldsymbol{x} \cdot w)^{\uparrow}) - \text{ECDF}(y^{\uparrow}) \rvert_1$$

$$\hat{w}_{\text{KS}} = \arg \min_{w} \, \lvert \text{ECDF}((\boldsymbol{x} \cdot w)^{\uparrow}) - \text{ECDF}(y^{\uparrow}) \rvert_{\infty}$$

How do these histogram distance-based estimators compare to the LS and SM estimators? In general, we found that they provide no significant advantage over the the SM and LS estimators despite increased computational complexity.

For illustration, see Figure \ref{fig:figureC}, where we plot the relative errors of the EMD and KS estimators and compare them to the other estimators. We have also plotted a particularly poor order-invariant estimator, the \textit{small-D} estimator, which takes the smallest $D$ elements of a vector and computes the squared-difference between them. While these other order-invariant estimators that we have plotted have higher inference errors than the LS and SM estimators, it remains an open question whether there exist other, more optimal order-invariant estimators.

\begin{figure}[]
\centering
\includegraphics[width=0.6\columnwidth]{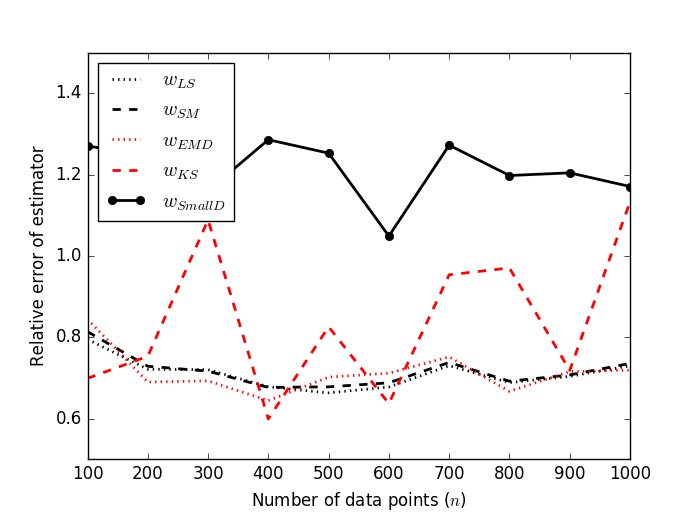}
\caption{The relative errors (averaged over 10 trials) of 5 order-invariant estimators are plotted, as a function of $n$. Here, $d=2$ and $\sigma_E^2=1$. The KS and EMD estimators don't offer increased accuracy despite increased computational complexity. The Small-D estimator offers particularly poor results. Similar trends are observed for other values of $d$ and $n$.}
\label{fig:figureC}
\end{figure}

\section{An expression for the noise-adjusted $w_{SM}$ when $d>2$}
\label{appendix:noise}

We would like to derive the general expression for a consistent, noise-adjusted self-moments (NA-SM) estimator. To do this, we equate the expected value of every sample moment of $(\boldsymbol{x} \cdot \hat{w} + e)$ with the corresponding sample moment of $y$. The $k^{th}$ sample moment of $y$ takes a straightforward form:

$$ \frac{1}{n} \sum_{i}^n y_i^k .$$

The $k^{th}$ moment of $(\boldsymbol{x} \cdot \hat{w} + e)$ is

$$ \frac{1}{n} \sum_{i}^n (\boldsymbol{x} \cdot \hat{w} + e)^k. $$

The expression inside the sum can be expanded in terms of the noise moments using the binomial theorem:

$$  \sum_j^k \binom{k}{j} (x_i \hat{w})^j \E[e^{k-j}] 
. $$

Thus, we can write each of the moment constraints to be:

\begin{equation*}
\label{}
\underbrace{\frac{1}{n}\sum_j^k \sum_{i}^n \binom{k}{j} ( x_i \hat{w})^j \E[e^{k-j}] 
}_{M_k} = \underbrace{\frac{1}{n}\sum_{i}^n y_i^k}_{N_k} 
\end{equation*}

It can be shown that this guarantees the consistency of noise-adjusted SM estimator, if there is a unique solution for $\hat{w}$ (see, for example, Theorem 2.1 in \cite{newey1994large}).

\section{Results from a Negative Control}
\label{appendix:negative-control}

\begin{table}[h]

  \caption{This table represents the results from applying the shuffled regression  estimator to the data. For comparison, a negative control is also included. Each cell represents the relative error, as defined earlier, between the OLS weights and the estimated weights. The mean (over 10 trials) is reported. Errors of less than 0.3 are shaded in green, between 0.3 and 1.0 in yellow, and above 1.0 in red. } \label{table-nc}
  \vspace{3mm}
  \centering

 \begin{tabu} to 0.9\columnwidth { | p{2cm} | X[c] | X[c] | X[c] | X[c] | X[c]|}
    

\cline{2-6}
\multicolumn{1}{c}{} & \multicolumn{5}{|c|}{\textbf{Number of Replications} ($C$) } \\

    \hline
    \textbf{Dataset} & 1 & 2 & 4 & 6 & 8\\\hline 
    
lsat & \cellcolor{red!15} 1.07 & \cellcolor{yellow!25} 0.9 & \cellcolor{yellow!25} 0.84 & \cellcolor{yellow!25} 0.68 & \cellcolor{yellow!25} 0.66 \\ \hline
accidents & \cellcolor{red!15} 1.07 & \cellcolor{red!15} 1.04 & \cellcolor{yellow!25} 0.92 & \cellcolor{yellow!25} 0.92 & \cellcolor{yellow!25} 0.74 \\ \hline
acetylene & \cellcolor{red!15} 1.82 & \cellcolor{red!15} 2.16 & \cellcolor{red!15} 1.43 & \cellcolor{red!15} 2.22 & \cellcolor{red!15} 1.12 \\ \hline
power-plant & \cellcolor{yellow!25} 0.8 & \cellcolor{yellow!25} 0.8 & \cellcolor{yellow!25} 0.8 & \cellcolor{yellow!25} 0.79 & \cellcolor{yellow!25} 0.8 \\ \hline
airfoil & \cellcolor{yellow!25} 0.76 & \cellcolor{yellow!25} 0.75 & \cellcolor{yellow!25} 0.74 & \cellcolor{yellow!25} 0.75 & \cellcolor{yellow!25} 0.75 \\ \hline
yacht & \cellcolor{red!15} 1.99 & \cellcolor{red!15} 1.3 & \cellcolor{red!15} 1.54 & \cellcolor{red!15} 1.82 & \cellcolor{red!15} 1.43 \\ \hline
concrete & \cellcolor{red!15} 1.13 & \cellcolor{red!15} 1.06 & \cellcolor{red!15} 1.04 & \cellcolor{red!15} 1.1 & \cellcolor{red!15} 1.14 \\ \hline
synthetic1 & \cellcolor{yellow!25} 0.92 & \cellcolor{yellow!25} 0.95 & \cellcolor{yellow!25} 0.89 & \cellcolor{yellow!25} 0.9 & \cellcolor{yellow!25} 0.88 \\ \hline
synthetic2 & \cellcolor{yellow!25} 0.97 & \cellcolor{yellow!25} 0.91 & \cellcolor{yellow!25} 0.93 & \cellcolor{yellow!25} 0.93 & \cellcolor{yellow!25} 0.89 \\ \hline
synthetic3 & \cellcolor{yellow!25} 0.9 & \cellcolor{yellow!25} 0.89 & \cellcolor{yellow!25} 0.91 & \cellcolor{yellow!25} 0.88 & \cellcolor{yellow!25} 0.88 \\ \hline
synthetic4 & \cellcolor{red!15} 1.28 & \cellcolor{red!15} 1.25 & \cellcolor{red!15} 1.27 & \cellcolor{red!15} 1.27 & \cellcolor{red!15} 1.25 \\ \hline
synthetic5 & \cellcolor{red!15} 1.01 & \cellcolor{yellow!25} 0.97 & \cellcolor{yellow!25} 0.98 & \cellcolor{yellow!25} 0.96 & \cellcolor{yellow!25} 0.95 \\ \hline
synthetic6 & \cellcolor{yellow!25} 0.82 & \cellcolor{yellow!25} 0.81 & \cellcolor{yellow!25} 0.82 & \cellcolor{yellow!25} 0.79 & \cellcolor{yellow!25} 0.81 \\ \hline
    
\end{tabu}

\end{table}

In Section \ref{subsection:standard}, we examined the accuracy of shuffled linear regression on several real-world datasets. During the course of the experiment, we randomly partitioned the datasets into separate replications. The labels were only shuffled \textit{within} each replication, and not across replications. As such, this raises the concern that by increasing the number of replications, we were effectively disambiguating the permutation matrix (i.e. the ordering of the labels).

To determine whether that was the case, we conducted a baseline experiment -- standard linear regression with a random permutation of the labels within each replication. If we were dividing the data into too many replications, then this baseline would be able to recover the linear weights of the model, just like shuffled linear regression. 

The results of the baseline experiment are found in Table \ref{table-nc}. In each cell is the relative error between the weights determined by a linear regression on random permutation of the labels and the weights determined by linear regression on the correctly-ordered labels (averaged across 10 trials). Because the relative error remains high -- generally above 0.7 -- we conclude that we were not disambiguating the labels to any significant extent. 

\end{appendices}

\end{document}